\newif\ifcomment
\newcommand{\kconv}[1]{\ensuremath{\sup_u \{
        l(\var,u) - y\cdot \left(1 - k(u,{#1})\right)
        \}
        }\xspace}
\newcommand{\kconvmult}[1]{\ensuremath{\sup_u \{
        l(\var,u) k(u,{#1})
        \}
        }\xspace}
\newcommand{\fsig}{\ensuremath{l^k}\xspace}
\newcommand{\phenv}{Pasch-Hausdorff envelope\xspace}
\newcommand{\menv}{Moreau envelope\xspace}
\newcommand{\gkrnl}{Gaussian RBF kernel\xspace}
\newcommand{\ckrnl}{$c$-exponential kernel\xspace}
\newcommand{\lapkrnl}{Laplacian kernel\xspace}
\newcommand{\mreg}{Moreau-Yosida regularization\xspace}
\newcommand{\kop}[1]{\mathcal{T}_{#1}} %
\newcommand{\kdro}{Kernel DRO\xspace} %
\newcommand{\wdro}{Wasserstein DRO\xspace} %
\newcommand{\wsd}{Wasserstein distance\xspace} %
\newcommand{\idro}{KI\xspace} %
\newcommand{\sdro}{ARKS\xspace} %
\newcommand{\cdro}{KE\xspace} %
\newcommand{\ctran}{$c$-transform\xspace}
\newcommand{\ktran}{$k$-transform\xspace}
\newcommand{\eqWRM}{\eqref{eq:sinha}\xspace}
\newcommand{\var}{\ensuremath{\theta}\xspace} %
\newcommand{\pemp}{\ensuremath{\hat{P}_N}\xspace} %
\newcommand{\ptrue}{\ensuremath{P_0}\xspace} %
\newcommand{\hnorm}[1]{\|{#1}\|_{\mathcal H}}
\newcommand{\lnorm}[1]{\operatorname{Lip}({#1})} %
\newcommand{\expected}[2]{{\mathbb E}_{#1}\ #2}
\newcommand{\rkhs}{\ensuremath{\mathcal H}\xspace}
\newcommand{\domain}{\ensuremath{\mathcal X}\xspace}
\newcommand{\ipm}{\ensuremath{\gamma_{\mathcal F}}\xspace} %
\DeclareMathOperator*{\sjt}{\mathrm{subject}\ \mathrm{to}}
\newtheorem{proposition}{Proposition}[section]
\newtheorem{lemma}[proposition]{Lemma}
\newtheorem{definition}{Definition}
\newtheorem*{remark}{Remark}
\newcommand\footnoteref[1]{\protected@xdef\@thefnmark{\ref{#1}}\@footnotemark}
\newcommand{\jz}[1]{
    [\textcolor{red!80!black}{\textbf{JZ:}} \textcolor{red!80!black}{\footnotesize\textbf{#1}}]
}
\newcommand{\letsremove}[1]{}
\newcommand{\jz}[1]{}
\newcommand{%
\begin{center}
  \textcolor{gray!80}{
  \hrulefill \,
    \large\ttfamily \#Begin \detokenize{}
  \, \hrulefill 
  }
\end{center}
\input{}
\begin{center}
  \textcolor{gray!80}{
  \hrulefill \,
    \large\ttfamily \#End of \detokenize{}
  \, \hrulefill 
  }
\end{center}
}[1]{%
\begin{center}
  \textcolor{gray!80}{
  \hrulefill \,
    \large\ttfamily \#Begin \detokenize{#1}
  \, \hrulefill 
  }
\end{center}
\input{#1}
\begin{center}
  \textcolor{gray!80}{
  \hrulefill \,
    \large\ttfamily \#End of \detokenize{#1}
  \, \hrulefill 
  }
\end{center}
}
\newcommand{\figtoyA}{Figure~\ref{fig:sgdstep_landscape}}
\newcommand{\eqKDRO}{\eqref{eq:kdro_dual}\xspace}
\begin{document}

\runningauthor{Zhu, Kouridi, Nemmour, Sch\"olkopf}

\twocolumn[

\aistatstitle{Adversarially Robust Kernel Smoothing}

\aistatsauthor{ Jia-Jie Zhu
\\ Empirical Inference Department\\ Max Planck Institute for Intelligent Systems\\ T\"ubingen, Germany\\
and Weierstrass Institute\\
Berlin, Germany\\
\texttt{zhu@wias-berlin.de}
\And Christina Kouridi\\ Empirical Inference Department\\ Max Planck Institute for Intelligent Systems\\ T\"ubingen, Germany
\\
Currently at InstaDeep Ltd.\\
London, United Kingdom
\\
\texttt{christinakouridi@gmail.com}
\AND Yassine Nemmour \\ Empirical Inference Department\\ Max Planck Institute for Intelligent Systems\\ T\"ubingen, Germany \\ \texttt{ynemmour@tuebingen.mpg.de}
\And Bernhard Sch\"olkopf\\ Empirical Inference Department\\ Max Planck Institute for Intelligent Systems\\ T\"ubingen, Germany \\ \texttt{bs@tuebingen.mpg.de}}

\aistatsaddress{  }
]

\begin{abstract}
We propose a scalable robust learning algorithm combining kernel smoothing and robust optimization. Our method is motivated by the convex analysis perspective of distributionally robust optimization based on probability metrics, such as the Wasserstein distance and the maximum mean discrepancy. We adapt the integral operator using supremal convolution in convex analysis to form a novel function majorant used for enforcing robustness. Our method is simple in form and applies to general loss functions and machine learning models.
Exploiting a connection with optimal transport, we prove theoretical guarantees for certified robustness under distribution shift.
Furthermore, we report experiments with general machine learning models, such as deep neural networks, to demonstrate competitive performance with the state-of-the-art certifiable robust learning algorithms based on the Wasserstein distance.
\end{abstract}

\section{Introduction}
When learning with finitely many samples, there is an inevitable distribution shift between the training data and the test data, characterized by empirical process theory \citep{vaartWeakConvergenceEmpirical2013}.
A lack of causal inference can also cause learners to lose robustness under the shifted distribution \citep{meinshausenCausalityDistributionalRobustness2018}.
Furthermore, potential malicious \emph{adversaries} may create large artificial distribution shifts to hamper modern deep learners \citep{madryDeepLearningModels2019}.
Hence, learning under \emph{distribution shift} presents significant challenges to current machine learning algorithms.

\emph{Distributionally robust optimization} (DRO) \citep{delageDistributionallyRobustOptimization2010,scarfMinmaxSolutionInventory1958} seeks to robustify against unknown distribution shift explicitly.
Given a loss function of interest $l(\var, \cdot)$,
it solves a \emph{robust optimization} \citep{soysterTechnicalNoteConvex1973,ben-talRobustOptimization2009a} problem
\newcommand{\introDro}{\eqref{eq:dro_intro}}
\begin{equation}
\label{eq:dro_intro}
\min_\var \sup_{P\in \mathcal C}\expected{\xi\sim P} l(\var, \xi), 
\end{equation}
where \var is the decision variable, $\xi$ noise or randomness, $\mathcal C$ a set of distributions over the uncertain variable $\xi$ that the optimizer wishes to robustify against, often referred to as the \emph{ambiguity set}.
DRO is particularly relevant to statistical machine learning as one may construct the ambiguity set $\mathcal C$ as a metric ball centering at the empirical distribution $\pemp$ that also contains the true data-generating distribution $\ptrue$.
For example, the performance guarantees for Wasserstein DRO have been established by \cite{mohajerinesfahaniDatadrivenDistributionallyRobust2018,zhaoDatadrivenRiskaverseStochastic2018}.
Here, the idea is to use known convergence rate results for empirical estimations of the underlying probability metrics, e.g., the Wasserstein metrics \citep{kantorovichSpaceTotallyAdditive1958} and the closely related \emph{integral probability metrics} (IPM) \citep{sriperumbudurEmpiricalEstimationIntegral2012,mullerIntegralProbabilityMetrics1997}.
Such metrics (or topologies) often correspond to smooth functions as their dual spaces, which characterize the empirical distribution's convergence to the true data-generating distribution \citep{vaartWeakConvergenceEmpirical2013,billingsleyWeakConvergenceMeasures1971}.
Researchers have proposed DRO algorithms with various statistically meaningful ambiguity sets in a large body of literature.
While not the focus of previous works, convex analysis tools
play important roles in enforcing distributional robustness. They are the primary tools we employ in this paper.

Certain simple DRO problems, such as linear classification with logistic regression losses, admit the tractable reformulation into convex problems, as studied by \cite{ben-talRobustSolutionsOptimization2013,namkoongVariancebasedRegularizationConvex,mohajerinesfahaniDatadrivenDistributionallyRobust2018,blanchetOptimalTransportBased2018,shafieezadeh-abadehDistributionallyRobustLogistic2015}.
However, this only applies to a limited class of convex loss functions and simple models, as also noted by, e.g., \cite{sinhaCertifyingDistributionalRobustness2017}.
For general machine learning models, e.g., deep neural networks (DNNs), and common losses $l$ in \eqref{eq:dro_intro}, there exists no tractable reformulation to solve DRO~\introDro.
This paper addresses general losses in machine learning tasks, which are less explored in terms of principled distributional robustness, save very few exceptions such as \cite{sinhaCertifyingDistributionalRobustness2017,blanchetOptimalTransportBased2018,zhuKernelDistributionallyRobust2020d}.

\newcommand{\wc}{worst-case\xspace}
\paragraph{Contribution.}
This paper 
leverages the critical roles smoothness and function majorants play in enforcing distributional robustness.
We summarize our contributions and sketch the main results.
\begin{enumerate}[noitemsep,topsep=0pt]
    \item We analyze the smooth function majorant perspective of distributional robustness, which generalizes the existing practice of using the \mreg in \wdro to flexibly chosen general majorants surrogate losses.
    Specifically, we propose the \ktran (Definition~\ref{def:ktran}) that adapts the convolution in the integral operator to the supremal convolution in convex analysis to form a new function majorant.
    \item 
    Using those tools, we propose a novel robust learning algorithm (Section~\ref{sec:algs}), 
    the \emph{adversarially robust kernel smoothing} (\sdro). It solves the minimax program
    \begin{equation}
            \min_\var
            \frac{1}N\sum_{i=1}^N 
            \Bigg\{
                \kconvmult{\xi_i}
            \Bigg\}.
    \end{equation}
    \item Exploiting a connection between the \ktran and optimal transport (OT), we provide theoretical guarantees in terms of robustness certificate for \sdro under distribution shift.
    Highlighting the role of kernel bandwidth, our analysis unifies the two perspectives of DRO using OT and kernel methods.
    \item 
    While \sdro is derived from kernel methods, it can be easily applied to large-scale machine learning with DNNs. For example, we report an experiment with a ResNet-20 model, where applying exact \wdro reformulation techniques (e.g.,~\citep{mohajerinesfahaniDatadrivenDistributionallyRobust2018,shafieezadeh-abadehDistributionallyRobustLogistic2015}) is out of the question.
    There, \sdro performs at least competitively with the WRM algorithm \eqWRM proposed by \cite{sinhaCertifyingDistributionalRobustness2017}.
    \item
    Our code is publicly available online at \url{https://github.com/christinakouridi/arks}.
\end{enumerate}

\paragraph*{Notation.}
In this paper, we refer to the uniform data distribution $\pemp:=\frac1N\sum_{i=1}^N\delta_{\xi_i}$ as the empirical distribution. We use \ptrue to denote the (unknown) true data-generating distribution.
For the loss functions of interest $l(\var, \xi)$, we sometimes omit \var when there is no ambiguity.
\var denotes the decision variables, such as the weights of neural networks.
$\xi\in \domain$ denotes the random variable of interest, e.g., input data or features. Its samples are denoted by $\xi_i$.
For conciseness, we limit the discussion to compact \domain's.
We use \rkhs to denote a function space in the context, e.g., RKHS introduced in the next section. 
$\lnorm{}$ denotes the Lipschitz semi-norm.
$\hnorm{}$ is the RKHS norm in the context.
We will assume loss functions $l$ to be bounded continuous functions throughout the paper; extensions to upper semi-continuity in optimization settings is straightforward. See, e.g., \citep{shapiroLecturesStochasticProgramming2014}.
Variables are in their vectorial representation, e.g., $x=[x_1,\dots,x_N]$, and $f(x)=[f(x_1),\dots,f(x_N)]$.
Throughout the paper, we will refer to DRO using the Wasserstein distance as \wdro.
\newcommand{\lyphat}{\ensuremath{\widehat{l_{y, p}}}}
We refer to the \mreg
$
\lyphat(x) :=\sup_u\{
        l( u) - y\cdot \|u - x\|^p
\}
$
as
the \emph{supremal convolution} of a function $l$ and the (scaled) norm function $ y\|\cdot\|^p$.
To avoid ambiguity, we refer to the maximization of a concave function as a convex program.
Finally, $\gamma$ denotes some metric or divergence measure in the probability simplex.

\section{Reproducing Kernel Hilbert Spaces}
\label{sec:bg_apprx}
A learning task can be mathematically described as a function approximation problem
${\min_{f\in H}\|f - l\|_{\cdot}}$,
for some criterion $\|\|_{\cdot}$, e.g., function norm.
The target function
$l$ is often only known at certain data points $[x_1, ..., x_N]$.
One way to approach the function approximation problem is to consider a function approximator of the form
$
 \sum_{j=1}^N a_j k (x_i,x_j) = l(x_i), 1\leq i\leq N,
$
where $a_j$ are the coefficients to be determined and $k (x_i,x_j)$ some bi-variate function.
It is in our interest that the matrix $[k (x_i,x_j)]_{i,j}$ should be positive definite.
Motivated by this, we now define a symmetric real-valued function $k$ as a positive (semi-)definite kernel if $\sum_{i=1}^n \sum_{j=1}^n a_i a_j k(x_i, x_j)\ge 0$ for any $n \in \mathbb{N}$, $\{ x_i \}_{i=1}^n \subset \mathcal{X}$, and $\{a_i\}_{i=1}^n \subset \mathbb{R}$.
It is known \citep[e.g.][Chapter 2]{SchSmo02} that there is a one-to-one relationship between every positive semi-definite kernel $k$ and a Hilbert space \rkhs, whose feature map $\phi\colon
\mathcal{X} \to \rkhs$ satisfies $k(x,y) = \langle \phi(x), \phi(y)
\rangle_\rkhs$.
This Hilbert space is \emph{reproducing}, meaning that 
$f(x) = \langle f, \phi(x) \rangle_\rkhs$ for all $f\in \rkhs, x \in
\mathcal{X}$.
We call \rkhs the reproducing kernel Hilbert space (RKHS), also termed the native space of the kernel $k$.
RKHSs are widely used in function approximation based on data due to their attractive properties.

In addition to the functional approximation aspect, the RKHS has also been recently used to manipulate distributions, leveraging its statistical properties as the so-called Glivenko-Cantelli classes; cf. \citep{vaartWeakConvergenceEmpirical2013}.
Relevant to the robustness aspect,  the \emph{maximum mean discrepancy} (MMD, \citep{grettonKernelTwoSampleTest2012}) associated with an RKHS \rkhs is a metric in the probability simplex,
$
\gamma_\rkhs(P, Q )
 = \sup_{
\|f\|_\rkhs \le 1} \int f \, d(P-Q),
$
given the associated kernel is characteristic.
In particular, the minimax optimal rate for the MMD empirical estimation has been studied by \cite{tolstikhin2016minimax}, which can be used to set the ambiguity set level $\epsilon$ in the DRO problem~\eqref{eq:kdro_dual} with computable constants, independent of the dimensions.
In contrast, measure concentration rates (used for \wdro in \citep{mohajerinesfahaniDatadrivenDistributionallyRobust2018}) for the \wsd are dimension-dependent.
The MMD also has a closed-form estimator, while computing \wsd is hard in general~\citep{peyreComputationalOptimalTransport2019,santambrogioOptimalTransportApplied2015}.
MMD can be generalized to the integral probability metrics (IPM)~\citep{mullerIntegralProbabilityMetrics1997} defined by some function class $\mathcal F$, i.e.,
${\ipm (P,\hat{P}):=\sup_{f\in \mathcal F}\int f d (P-\hat{P})}.$
The well-known choices relevant to this paper include: $\mathcal F=\{f:\lnorm{f}\leq 1\}$ recovers the type-1 Wasserstein metric (Kantorovich metric);
the RKHS norm-ball $\mathcal F=\{f:\hnorm{f}\leq 1\}$ recovers the MMD.

Given a kernel $k$ and probability measure $\mu$, recall that the integral operator $\kop{}: L^2_\mu\to \rkhs$ is defined as
\begin{equation}
    \label{eq:int_op}
    \begin{aligned}
        \kop{}\ l (x)  := \int l(z) k(x,z) d\mu(z).
    \end{aligned}
\end{equation}
The integral operator maps $L^2_\mu$ to a subspace of the RKHS \citep{wendlandScatteredDataApproximation2004,conwayCourseFunctionalAnalysis2019} and
is used in the celebrated Mercer's theorem to characterize the eigendecomposition of RKHS functions.
In the context of this paper, we view the integral operator as a smoothing operation.

\section{Distributionally Robust Optimization for Machine Learning}
\label{sec:bg_dro}
We limit our discussion to DRO using the Wasserstein metrics~\citep{mohajerinesfahaniDatadrivenDistributionallyRobust2018,zhaoDatadrivenRiskaverseStochastic2018,gaoDistributionallyRobustStochastic2016,blanchetQuantifyingDistributionalModel2017}, the MMD~\citep{zhuKernelDistributionallyRobust2020d,staibDistributionallyRobustOptimization2019}, 
and general IPMs~\citep{zhuKernelDistributionallyRobust2020d}.
See \citep[Section~1.1]{gaoDistributionallyRobustStochastic2016,grettonKernelTwoSampleTest2012,arbelGradientRegularizersMMD2021,peyreComputationalOptimalTransport2019,weedSharpAsymptoticFinitesample2017} for the details of why those probability metrics are more advantageous in many machine learning applications than, e.g., $f$-divergences.
For convenience, we now restate the data-driven DRO primal formulation in \eqref{eq:dro_intro} with a probability discrepancy constraint.

\begin{equation}
    \label{eq:dro_metric}
    (\emph{DRO-\textrm{Primal}}):\ \min_\var \sup_{\gamma(P,\pemp)\leq\epsilon}\expected{P} l(\var, \xi),
\end{equation}
The discrepancy measure $\gamma$ in \eqref{eq:dro_metric} can be chosen to be  an IPM or OT metric. 

In general, solving the minimax DRO problem~\eqref{eq:dro_metric} requires a reformulation via the duality of conic linear optimization, cf. \citep{shapiroDualityTheoryConic2001,ben-talDerivingRobustCounterparts2015}.
While the \wsd has become the most popular choice for the DRO problem,
it is important to understand that one \emph{cannot} simply reformulate any \wdro problem as a convex program, except for very simple losses such as logistic regression~\citep{shafieezadeh-abadehDistributionallyRobustLogistic2015}.
Unfortunately,
for many practical machine learning models, there exists no exact tractable reformulation.
Popular \wdro approaches such as those proposed in \citep{mohajerinesfahaniDatadrivenDistributionallyRobust2018,zhaoDatadrivenRiskaverseStochastic2018} apply to a limited class of loss functions and models, such as logistic regression (linear classification).
Moreover, it is also known that estimating the Lipschitz constant for general models is intractable; cf. \citep{virmauxLipschitzRegularityDeep2018,biettiKernelPerspectiveRegularizing2019}, making Lipschitz regularization in \citep{shafieezadeh-abadehRegularizationMassTransportation2019} difficult.
This paper does not impose such restrictions on losses or models.
For commonly-used machine learning losses, it is well-known that one must resort to general approximate solution methods such as in~\citep{sinhaCertifyingDistributionalRobustness2017,blanchetOptimalTransportBased2018,zhuKernelDistributionallyRobust2020d}.

Most relevant to our work, the authors of \citep{sinhaCertifyingDistributionalRobustness2017} proposed to \emph{give up} certifying the exact distributional robustness level $\epsilon$ and apply a convexification technique using the \mreg, as approximate \wdro.
They solve the risk minimization problem, which they termed Wasserstein robust method (WRM),
\begin{multline}
    \label{eq:sinha}
        (\emph{WRM}):\
        \min_\var\frac{1}N\sum_{i=1}^N
        \Big\{
            \widehat{f^y_\var}(\xi_i):= \sup_u\{l(\var,u)\\
            - y\cdot c(u, \xi_i)\}
        \Big\},
\end{multline}
where $c$ is called the transport cost \citep{santambrogioOptimalTransportApplied2015}.
For example, when $c$ is the squared Euclidean distance, $\widehat{f^y_\var}(\xi_i) $ is referred to as the Moreau-Yosida regularization or \menv.
In that setting, WRM overcomes the hurdle of the aforementioned hardness of DRO for general machine learning tasks by virtue of a convexification effect. Intuitively, subtracting a strongly convex function makes the inner objective more concave. This technique was also used in
robust nonlinear optimization \citep{houskaNonlinearRobustOptimization2013}, trust-regions in numerical optimization (Chapter~4 of \citep{nocedal2006numerical}), and the S-procedure in robust control~\citep{polikSurveySLemma2007,yakubovichSprocedureNolinearControl1971}.
We refer interested readers to those works for detailed numerical procedures.
Later, we compare our novel kernel smoothing algorithm with WRM~\citep{sinhaCertifyingDistributionalRobustness2017} in experiments with general machine learning models, e.g., DNNs, to demonstrate our advantages over classical reformulation techniques.

On the other hand, if we choose the metric $\gamma$ to be an IPM associated with the function class $\mathcal F\subseteq \rkhs$, \cite{zhuKernelDistributionallyRobust2020d} proved the IPM-DRO duality.
It states that 
the primal DRO problem~\eqref{eq:dro_metric} is equivalent to solving the variational optimization problem
\begin{equation}
        \begin{aligned}
                \vspace{-0.2cm}
                (\emph{IPM-DRO}):\
                &\min_{ \var,f\in{\mathcal H}}& & \frac{1}N\sum_{i=1}^N f(\xi_i)+ \epsilon\hnorm{f}\\
                &\sjt& &
                l(\var, \xi)
                \leq
                f(\xi)   ,\ \forall \xi \in\domain a.e.
        \end{aligned}
        \label{eq:kdro_dual}
\end{equation}
Those authors also proposed approximate solution methods when the IPM is chosen as the MMD (see Section~\ref{sec:bg_apprx} for the advantages of MMD), which generalized the results of \cite{staibDistributionallyRobustOptimization2019} to general loss functions.
Through the lens of this paper, \eqKDRO explicitly seeks an upper \emph{envelope} $f$ of the loss $l$ as solutions to the variational dual program~\eqref{eq:kdro_dual}.
Instead of the Moreau-Yosida regularization, the smooth majorant role there is played by a more general smooth function $f\in\rkhs$.
Note that program~\eqref{eq:kdro_dual} is trivial if the loss $l$ is in an \rkhs and has a known RKHS norm.
The authors of \citep{zhuKernelDistributionallyRobust2020d} then proposed \kdro that makes it possible to use the MMD associated with any universal RKHSs for DRO and compute the rate for general losses.
To our knowledge, that is the only work aiming to exactly reformulate DRO for general machine learning models.
Compared to their method, we provide an approach that produces a function that satisfies the (semi-)infinite constraint in \eqref{eq:kdro_dual}, whereas \cite{zhuKernelDistributionallyRobust2020d}'s method can only satisfy that constraint approximately through constraint sampling.

To motivate our method, we make
two key observations into \eqWRM:
(1) the absence of the robustness level $\epsilon$ and (2) the fixed dual variable $y$.
That insight is also equivalent to
\emph{giving up the exact minimization} w.r.t. $f\in\rkhs$ and $\hnorm{f}$ in dual IPM-DRO~\eqref{eq:kdro_dual}, since fixing $y$ in \eqref{eq:sinha} is to not optimize w.r.t. the \menv $\widehat{f^y_\var}$.
This is equivalent to Lagrangian relaxation in nonlinear optimization.

\section{A Kernel Smoothing Algorithm for Robust Learning}
\label{sec:algs}
The so-called \ctran
$\widehat{f^y_\var}$ in WRM~\eqWRM, also known as the \mreg, plays a crucial role in the robustness of WRM~\eqWRM. Importantly, it is a \emph{majorant} function. 
\begin{definition}[Majorant]
    We say that $f$ is a majorant of $l$ if $f(\xi) \geq l(\xi)$ for $\xi$ a.e. in the domain of $l$.
\end{definition}
Notable examples of majorants relevant to DRO include the \mreg as well as the kernel functions in \eqKDRO.
In this paper, we also refer to a majorant as an upper envelope function.
To further make clear the roles that majorants play in robustness,
we provide a few convex analysis examples in the appendix regarding special cases of majorants relevant to DRO. 

Our \emph{key insight} from \eqKDRO, \eqWRM, and Lemma~\ref{thm:w1envelope} is that \emph{empirical risk minimization with the loss replaced by a majorant surrogate loss induces distributional robustness}.
Motivated by such relationship between robustness and the use of majorants, e.g., \ctran (and \mreg) in \eqWRM, kernel functions in \eqKDRO, we now introduce our robust learning algorithm.
\subsection{Adversarially Robust Kernel Smoothing}
\label{sec:smooth}
Our starting point is the minimax robust optimization (RO) problem~\citep{ben-talRobustOptimization2009a,soysterTechnicalNoteConvex1973}
\begin{equation}
	\label{eq:ro}
    (\emph{RO}):\
	\min_\var \sup_{u\in\domain} l(\var, u),
\end{equation}
where the learner assumes the uncertain variable $u$ to take the worst-case value.
It is easy to see the pessimism of RO as we do not know the true support of $u$ in machine learning.
On the other hand, empirical risk minimization (ERM; also referred to as the sample average approximation
 ) enjoys better performance but is more fragile to shift in distribution and uncertainty.
It can be seen as a simple form of smoothing by averaging.
Deviating from the typical DRO dual reformulation approaches, our \emph{key idea is to view smoothness as the opposite side of robustness} by manipulating the integral operator in RO.
To that end, let us first establish some tools.

The image of $l$ under the integral operator $\kop{}\ l \in\rkhs$ \eqref{eq:int_op} is a smooth function since it is in an RKHS, but does not directly enable robust learning.
To achieve robustness, we notice the following inequality
\begin{multline}
    \label{eq:sup_int}
    \kop{}\ l (x)  = \int l(z) k(x,z) d\mu(z) \\ \leq \sup_u \{
        l(u) k(u,{x})\},\ \forall x\in\domain,
\end{multline}
which is a straightforward inequality between expectation and supremum.
Using the right-hand-side expression, we propose the following majorant analogous to \ctran.
\begin{definition}[\ktran]
    \label{def:ktran}
    The \ktran of a function $l$ associated with kernel $k$ is defined as
    \begin{equation*}
        \label{eq:ktrans}
        {\fsig(x):=\sup_u \{l(u) k(u,{x})\}}.
    \end{equation*}
\end{definition}
It is helpful to think of concrete examples where $k(u,{x})$ is the \gkrnl or \lapkrnl.
To make our discussion more general, we now propose the following family of kernels inspired by the transport cost $c$ of OT.
\begin{definition}[\ckrnl]
    \label{def:ckernel}
    Suppose $c$ is the transport cost (as in the \wsd). The \ckrnl with bandwidth $\sigma>0$ is given by
    $
    k(x,x')=e^{-c(x,x')/\sigma}
    $.
\end{definition}
Note that a relevant kernel on probability metrics was studied in \citep{deplaenWassersteinExponentialKernels2020}.
For conciseness, we focus on the \gkrnl and \lapkrnl in the rest of this section.
Other constructions of majorants are possible and discussed in the appendix.
It is then straightforward to verify the following:
\begin{proposition}
    \label{thm:majorant}
    The \ktran of $l$ is a majorant of $l$. Furthermore, we have
        $\fsig \to l \textrm{ as } \sigma \to 0.$
\end{proposition}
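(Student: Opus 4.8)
The plan is to prove the two claims separately, both by elementary arguments that exploit the defining property $k(x,x)=1$ of the \gkrnl, the \lapkrnl, and more generally the \ckrnl (for which the transport cost satisfies $c(x,x)=0$, so $k(x,x)=e^{0}=1$). For the majorant claim I would simply restrict the supremum in Definition~\ref{def:ktran} to the single feasible point $u=x$, which gives
\[
\fsig(x)=\sup_u\{l(u)k(u,x)\}\ \geq\ l(x)k(x,x)=l(x),
\]
valid for every $x\in\domain$ and requiring nothing beyond $k(x,x)=1$. This is the easy half.

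For the convergence $\fsig\to l$ as $\sigma\to 0$ I would establish matching lower and upper bounds. The lower bound is already in hand: the majorant property gives $\fsig(x)\geq l(x)$ for all $\sigma>0$, hence $\liminf_{\sigma\to0}\fsig(x)\geq l(x)$. The substance is the upper bound. Fixing $\epsilon>0$, I would split the supremum over \domain into a region near $x$ and a region far from $x$. Using that \domain is compact and $l$ is continuous — hence bounded, say $0\leq l\leq M$, and uniformly continuous — I choose $\delta>0$ so that $c(u,x)<\delta$ forces $|l(u)-l(x)|<\epsilon$. On the near region I bound $l(u)k(u,x)\leq l(u)\leq l(x)+\epsilon$, using $k(u,x)\leq 1$ together with $l\geq 0$. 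On the far region $c(u,x)\geq\delta$ the kernel decays, $k(u,x)=e^{-c(u,x)/\sigma}\leq e^{-\delta/\sigma}$ (and analogously for the norm-based Gaussian and Laplacian kernels), so $l(u)k(u,x)\leq M e^{-\delta/\sigma}$, which drops below $\epsilon$ once $\sigma$ is small. Combining the two regions yields $\fsig(x)\leq l(x)+\epsilon$ for all sufficiently small $\sigma$; since $\epsilon$ was arbitrary, $\limsup_{\sigma\to0}\fsig(x)\leq l(x)$. With the lower bound this gives $\fsig(x)\to l(x)$, and because $\delta$ and the decay estimate are uniform in $x$, the convergence is in fact uniform, i.e.\ $\supnorm{\fsig-l}\to 0$.

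The main obstacle — and the point I would be careful to state as a hypothesis — is the non-negativity of the loss. The near-region bound $l(u)k(u,x)\leq l(u)$ rests on $l\geq 0$ and $k\leq 1$; without it the multiplicative \ktran need not return to $l$. Indeed, if $l$ took a value close to $0$ at some point while $l(x)<0$, the product $l(u)k(u,x)$ could approach $0>l(x)$, so $\fsig(x)$ would fail to converge to $l(x)$ as $\sigma\to0$. Since the losses in our setting are non-negative, this is not a real restriction, but I would flag it explicitly. A secondary point worth checking is that the suprema are attained: continuity of $l$ and $k$ on the compact domain \domain makes $u\mapsto l(u)k(u,x)$ continuous, so each $\sup_u$ is a maximum and the near/far decomposition is legitimate.
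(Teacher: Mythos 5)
Your proof is correct, and its first half is exactly the paper's argument: restrict the supremum in the \ktran to the single point $u=x$ and use $k(x,x)=1$. Where you genuinely add something is the convergence claim. The paper offers no argument there at all -- it declares the limit ``obvious by examining the expression'' of the \gkrnl and \lapkrnl -- so your near/far decomposition (uniform continuity of $l$ on the compact \domain handling the region $c(u,x)<\delta$, the uniform decay bound $e^{-\delta/\sigma}$ killing the far region) is a completion of the proof rather than a variant of it, and it buys a stronger conclusion, namely uniform convergence $\supnorm{\fsig - l}\to 0$ rather than pointwise. Your flag on non-negativity is a real catch, not pedantry: the paper's standing assumption is only that losses are bounded and continuous, and for a sign-indefinite loss the second claim of the proposition is simply false -- e.g.\ $l\equiv -1$ on a compact \domain with more than one point gives $\fsig(x) = -\inf_u k_\sigma(u,x)\to 0 \neq -1$ as $\sigma\to 0$, precisely by the mechanism you describe. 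The authors implicitly rely on $l\geq 0$ (their appendix analysis of the inner problem explicitly assumes a positive loss), so stating it as a hypothesis, as you do, is what makes the proposition true as intended.
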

Let us use the tools above to derive our robustification method.
We mitigate the conservatism of RO~\eqref{eq:ro} by replacing the the original loss $l$ with a \emph{smoothed } version
    $
	\min_\var \sup_{u}\kop{}\ l(\var,u)
    $
    .
In practice, we can only compute an empirical version of the integral operator based on data $\xi_i$. We have the following
\begin{multline}
	 \sup_{u}\Big\{\hat{\kop{}\ } l(\var,u) :=
	 	 \frac1N\sum_{i=1}^N k(\xi_i, u)l(\var,u) 
          \Big\} \\ \leq  \frac1N\sum_{i=1}^N  \kconvmult{\xi_i}.
          \label{eq:sup_int_op}
\end{multline}
The inner objective on the right-hand-side is the \ktran.
That objective is indeed less conservative than RO and more robust than ERM since
\begin{multline}
    \label{eq:erm_to_ro}
	\emph{(ERM)}:\ \sup_{u}\frac1N\sum_{i=1}^N l(\var,  \xi_i) \leq  \\
     \frac1N\sum_{i=1}^N  \kconvmult{\xi_i}
	\leq  \sup_ul(\var, u)\ :\emph{(RO)}
\end{multline}

for \ckrnl{}s, e.g., \gkrnl{}s.

We are now ready to propose the following novel robust learning scheme based on the insight from RO, \eqKDRO, and \eqWRM.
The \emph{main idea} here is simple: we minimize the risk using a surrogate loss constructed by the \ktran,
\begin{equation}
    \label{eq:ksmooth}
        (\emph{\sdro}):\
            \min_\var
            \frac{1}N\sum_{i=1}^N 
            \Big\{
                l^k_\var(\xi_i)\\
                := \kconvmult{\xi_i}
            \Big\}.
\end{equation}
\sdro resembles the risk minimization schemes using majorant surrogate losses in \eqKDRO and \eqWRM, but with our newly proposed \ktran.
Program~\eqref{eq:ksmooth} also bears a clear resemblance to the \emph{Nadaraya-Watson} model, and the \emph{vicinal risk minimization}~\citep{chapelleVicinalRiskMinimization} in the literature.
However, our approach differs in taking supremum to guarantee distributional robustness (see Section~\ref{sec:theory}) instead of merely smoothing.
Different from the existing robust kernel density estimation methods such as \citep{kimRobustKernelDensity2012}, which was applied by \cite{ningDatadrivenDecisionMaking2018} to learn uncertainty sets for robust optimization, \sdro considers specifically the worst-case risk of the loss $l$, rather than only performing general unsupervised density estimation.

Compared with existing DRO approaches, \sdro~\eqref{eq:ksmooth} does not use \emph{explicit regularization} (e.g., \citep{shafieezadeh-abadehRegularizationMassTransportation2019}), but an \emph{implicit} one.
To see that,
we establish the following.
\begin{proposition}
    \label{thm:recover_ro}
    Suppose the kernel bandwidth tends to infinity $\sigma\to\infty$, \sdro~\eqref{eq:ksmooth} is equivalent to the \emph{worst-case robust optimization} (RO)~\eqref{eq:ro}~\citep{ben-talRobustOptimization2009a,soysterTechnicalNoteConvex1973}.

    If kernel bandwidth tends to zero $\sigma\to 0$, then \sdro~\eqref{eq:ksmooth} recovers (ERM)
$
            \frac{1}N\sum_{i=1}^N 
            \min_\var l(\var, \xi_i).
$
\end{proposition}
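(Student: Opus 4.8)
The plan is to treat the two limits separately by tracking how the \ckrnl $k(x,x')=e^{-c(x,x')/\sigma}$ degenerates, and then to transfer the resulting (uniform) convergence of the \sdro objective in \eqref{eq:ksmooth} to the optimization problems themselves. Throughout I use the standing assumptions that \domain is compact, that $l$ is bounded and continuous, and that the transport cost $c$ is continuous and nonnegative; compactness then gives $M:=\sup_{u,x\in\domain}c(u,x)<\infty$ and hence $0\le k\le 1$ on $\domain\times\domain$.

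For the first claim ($\sigma\to\infty$), I would first note the uniform bound $e^{-M/\sigma}\le k(u,\xi_i)\le 1$, so that $\sup_u|k(u,\xi_i)-1|\le 1-e^{-M/\sigma}\to 0$. Applying the elementary inequality $|\sup_u g_1(u)-\sup_u g_2(u)|\le\sup_u|g_1(u)-g_2(u)|$ with $g_1=l(\var,\cdot)k(\cdot,\xi_i)$ and $g_2=l(\var,\cdot)$ yields
\[
\Big|\,l^k_\var(\xi_i)-\sup_u l(\var,u)\,\Big|\le \supnorm{l}\,\big(1-e^{-M/\sigma}\big),
\]
uniformly in \var and in $\xi_i$ (and note this needs no sign assumption on $l$). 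Since the limiting value $\sup_u l(\var,u)$ is independent of $\xi_i$, averaging over $i$ shows that the \sdro objective converges uniformly in \var to $\sup_u l(\var,u)$, which is exactly the objective of RO~\eqref{eq:ro}; uniform convergence then carries over to the optimal value and minimizers.

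For the second claim ($\sigma\to 0$), I would invoke Proposition~\ref{thm:majorant}, which already establishes $\fsig\to l$ as $\sigma\to 0$. Applied to the function $l(\var,\cdot)$ for each fixed \var, this gives $l^k_\var(\xi_i)\to l(\var,\xi_i)$ at every data point, and summing over the $N$ samples shows that $\frac1N\sum_{i=1}^N l^k_\var(\xi_i)$ collapses to the empirical risk $\frac1N\sum_{i=1}^N l(\var,\xi_i)$, recovering ERM. As before, boundedness of $l$ and compactness of \domain upgrade this to uniformity in \var, so the limiting problem is the empirical risk minimization $\min_\var \frac1N\sum_{i=1}^N l(\var,\xi_i)$.

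The only genuinely delicate point is the interchange of the limit in $\sigma$ with the supremum over $u$ hidden inside $l^k_\var$. In the $\sigma\to\infty$ regime this is harmless, since $k(\cdot,\xi_i)\to 1$ uniformly and the squeeze above is immediate. The $\sigma\to 0$ regime is the real obstacle: the pointwise limit of $k(\cdot,\xi_i)$ is the discontinuous indicator $\mathbbm{1}[u=\xi_i]$, so one cannot naively exchange limit and supremum. This is precisely the concentration estimate already secured by Proposition~\ref{thm:majorant} (the kernel sharpens at $\xi_i$ while continuity of $l$ keeps $l(\var,u)$ close to $l(\var,\xi_i)$ on the relevant neighborhood), which is why I would route the second case through that proposition rather than reproving the estimate by hand.
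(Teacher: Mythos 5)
Your proof is correct and takes essentially the same route as the paper's: as $\sigma\to\infty$ the kernel tends uniformly to $1$, so the \ktran objective becomes the worst-case (RO) objective, and as $\sigma\to 0$ the kernel concentrates at each data point, so the \sdro objective collapses to the empirical risk. If anything, the paper's own argument is sketchier---it simply cites $\lim_{\sigma\to\infty}k(u,x)=1$ and the Dirac-type limit $\lim_{\sigma\to 0}k(u,x)=\delta_x(u)$---so your quantitative bound $\supnorm{l}\,(1-e^{-M/\sigma})$ for the first case and your explicit handling of the limit--supremum interchange via Proposition~\ref{thm:majorant} in the second supply details the paper leaves implicit.
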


If we choose a bandwidth $\sigma$ between those cases, the robustness is between RO and ERM, where DRO is.
\begin{remark}[Robustness, kernel bandwidth, and size of the function space]
Intuitively, if the kernel bandwidth is large, the function space becomes small.
In terms of robustness, our analysis is also consistent with the characterization of dual function space sizes for DRO and RO in \citep{zhuKernelDistributionallyRobust2020d}: large dual function spaces correspond to conservative but more robust optimization.
In contrast, smaller ones have better performance but are less robust.
We see those insights reflected in Proposition~\ref{thm:recover_ro} and further in Section~\ref{sec:theory}.
\end{remark}
Our risk minimization scheme \eqref{eq:ksmooth} can be straightforwardly used with stochastic gradient-based optimization for large-scale learning, e.g., with DNNs. We detail the training procedure in
Algorithm~\ref{alg:smooth}.
\begin{algorithm}[htb!]
  \caption{Robust Learning with \sdro}
  \label{alg:smooth}
  \begin{algorithmic}[1]
      \STATE {\bf input:} data sampler, initial iterate $\var_0$
      \FOR {$k=0,1,2,\dots,T$}
      \STATE sample $\{\xi_k\}$ and find 
      $u^*_k $ by maximizing $l(\var_k, u) k(u,{\xi_k})$ w.r.t. $u$
      \STATE update \var by stochastic gradient descent using estimate $\nabla_\var l(\var_k, u^*_k)$
      \ENDFOR
      \STATE {\bf output:} approximate solution $\var^*:=\var^T$
  \end{algorithmic}
  \end{algorithm}
Note that Step~3 of Algorithm~\ref{alg:smooth} can be seen as a proximal algorithm, discussed further in the next section.
\sdro~\eqref{eq:ksmooth} can also be interpreted as a form of \emph{adversarial training}~\citep{madryAdversarialRobustnessTheory,wongProvableDefensesAdversarial,goodfellowExplainingHarnessingAdversarial2015}:
for each $\xi_i$, the inner maximization problem of \eqref{eq:ksmooth} looks for an adversarial example $u$ that hurts the learner the most.
In the case of \gkrnl,
we show that the inner maximization objective in \eqref{eq:ksmooth} has favorable convexity structures for suitable choices of $\sigma$ in the next section, as well as in the appendix.

We further illustrate the geometric intuition of \sdro in \figtoyA{} using a toy problem.
For conciseness, we defer detailed experimental setups to the appendix.
The idea is to model the distribution shift and adversarial perturbation as a stochastic transition from the original state $X$ (illustrated as the black cross in \figtoyA{} (right)) to a perturbed uncertain state $U$ (the gray curve).
Concretely, the relationship in \eqref{eq:sup_int} (restated below for convenience) holds for any distribution $\forall \mu \in \mathcal P$,
\begin{multline*}
    \int l(z) k(x,z) d\mu(z) \leq \sup_u \{
        l(u) k(u,{x})\},\ \forall x\in\domain.
\end{multline*}
The left-hand-side (LHS) above can be interpreted as the \emph{conditional expectation}
        $
                \mathbb E [l(U)|X = x],
        $
which is the expected loss under the uncertain state $U$ after a distribution shift from the empirical data distribution.
In this context, a smooth kernel $k$ models a conditional density $P(U=z|X=x)$ of this stochastic transition.
Unlike $f$-divergence-based DRO~\citep{ben-talRobustSolutionsOptimization2013,namkoongStochasticGradientMethods2016}, our modeling does not require the shifted distribution to be absolute continuous (i.e., having the same support) w.r.t. the empirical distribution. \sdro uses the robust version of this operation characterized by the \ktran (i.e., right-hand-side (RHS) above) instead of the expectation on the LHS.
Therefore, the resulting surrogate worst-case loss (plotted in \figtoyA{} (left)) upper-bounds the original expected loss.
The worst-case adversarial perturbation is also plotted as a red cross (right).
See the caption of \figtoyA{} and the appendix for more details.
\begin{figure}[tb]
    \centering
    \begin{subfigure}{0.49\linewidth}
        \centering
        \includegraphics[height=1.25in,valign=c]{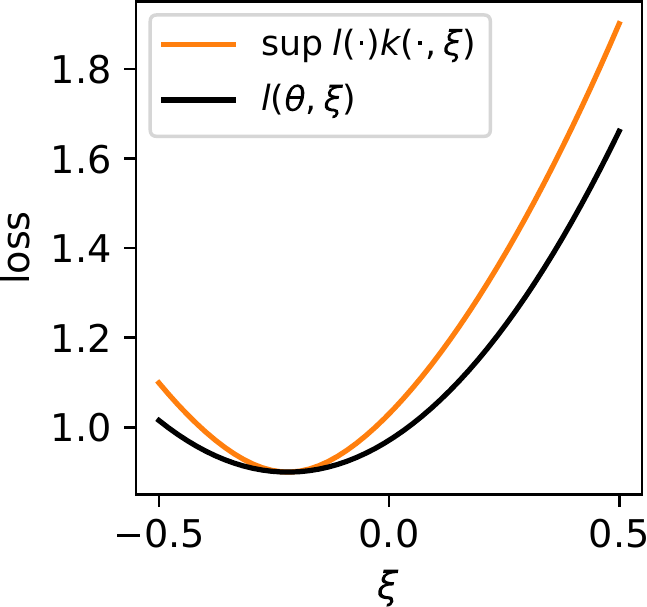}
        \label{fig:landscape}
    \end{subfigure}
    \begin{subfigure}{0.49\linewidth}
        \centering
        \includegraphics[height=1.35in,valign=c]{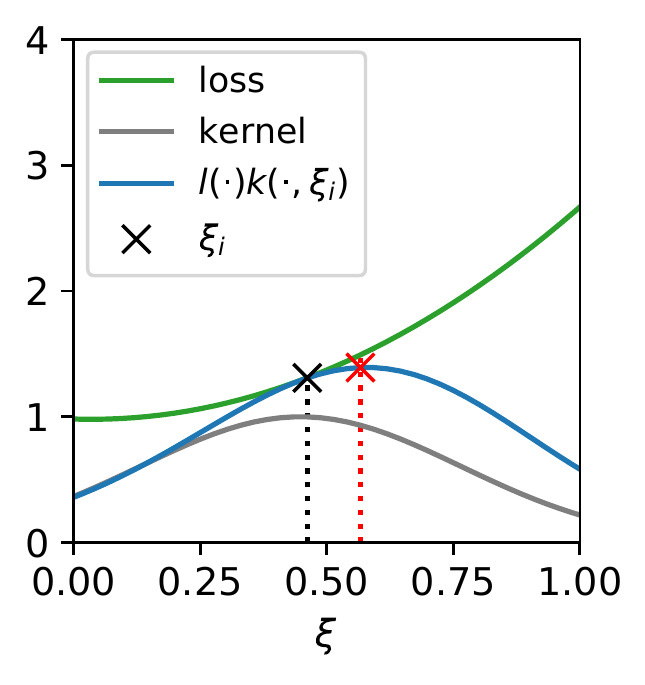}
        \label{fig:sgdstep}
    \end{subfigure}
    \caption{
    (\textbf{left})
        Loss landscape of the kernel robust smoothed loss
        $\fsig:=\sup_u \{
                l(u) k_{\sigma}(u,{\cdot})
                \}$. As analyzed in the text, as the width $\sigma$ decreases, the \sdro surrogate loss tends towards the original loss, i.e., $\fsig \to l \textrm{ as } \sigma \to 0$. Note that the kernel-smoothed loss $\fsig$ is a majorant of the original loss $l$.
    (\textbf{right}) 
        Illustration of the inner maximization problem of \sdro.
        This figure illustrates the mechanism that \sdro finds the adversarial example by kernel smoothing.
        The figure plots the original loss $l$ in green. The inner objective (using the \ktran in Definition~\ref{def:ktran}) is plotted in blue.
        The black cross is a sampled data point $\xi_i$.
        The red cross is the computed solution to the inner maximization problem of \sdro.
    }
    \vspace{-0.5cm}
    \label{fig:sgdstep_landscape}
\end{figure}

\section{Certifying distributional robustness}
We now detail the theoretical guarantees for our \sdro algorithm.
Our analysis is based on an insight on the connection between our robustification scheme using the \ktran~\eqref{eq:ktrans} and OT.
We first show that \sdro can be viewed as a robustification and convexification scheme in the log-transformed space.
All proofs are given in the appendix.
\subsection{Robustification in log scale}
\newcommand{\lnlsighat}{\ensuremath{\widehat{\ln l_{1/\sigma}}}}
We rewrite the surrogate loss $l^k_\var(\xi_i)$ of the inner optimization problem in \eqref{eq:ksmooth} by simplying taking the log transform, obtaining
\begin{equation}
\label{eq:intro_ctran_lnl}
    l^k_\var(\xi_i) 
    =
    \exp \sup_u
    \Bigg\{
     \ln l(u)
     -\frac{1}{\sigma}c(u,\xi_i)
    \Bigg\}.
\end{equation}
We exchanged the $\sup$ and $\exp$ above due to the monotonicity and continuity of the exponential function.
Using the above relationship, 
we rewrite \sdro as the equivalent optimization problem
\begin{multline}
    \label{eq:log}
    \min_\var
    \frac{1}N\sum_{i=1}^N 
    \exp
    \Bigg\{
    \lnlsighat (\var, \xi_i):=
    \\
    \sup_u
    \big\{
    \ln l(u)
    -\frac{1}{\sigma}c(u,\xi_i)
    \big\}
    \Bigg\}
    ,
\end{multline}
where
$\lnlsighat (\var, \cdot )$ denotes the (negative) \ctran of the log-loss $\ln l(\var, \cdot)$.
The following lemma states that the inner maximization objective is concave for certain choices of the kernel bandwidth.
Hence, standard analysis such as \citep{linGradientDescentAscent2021} applies to our setting.

A function $f$ on $\mathbb R^d$ is said to be L-smooth if
    $
	\|\nabla f(x)-\nabla f(y)\|_{2} \leq L\|x-y\|_2, \forall x,y\in\domain,
    $
provided that all quantities exist.
\begin{proposition}
    [Convexification]
    Suppose the function $\ln l(\var, \cdot)$ is L-smooth, transport cost $c$ is 1-strongly convex, and $\sigma < \frac{1}{L}$.
    Then, the inner maximization objective of \eqref{eq:log} is strictly concave.
    \label{thm:convexify}
\end{proposition}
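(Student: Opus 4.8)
The plan is to show that the bracketed inner objective
$g(u) := \ln l(\var, u) - \tfrac{1}{\sigma} c(u, \xi_i)$,
viewed as a function of $u$ for fixed $\var$ and $\xi_i$, is strictly concave by establishing that $-g$ is strictly convex. First I would record the two hypotheses in ``curvature'' form. The $L$-smoothness of $\ln l(\var, \cdot)$ is equivalent to the two-sided statement that both $\tfrac{L}{2}\|\cdot\|_2^2 + \ln l(\var, \cdot)$ and $\tfrac{L}{2}\|\cdot\|_2^2 - \ln l(\var, \cdot)$ are convex; only the second (the quadratic upper-curvature bound, morally $\nabla^2 \ln l \preceq L I$) is needed here. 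The $1$-strong convexity of $c(\cdot, \xi_i)$ is equivalent to $c(\cdot, \xi_i) - \tfrac{1}{2}\|\cdot\|_2^2$ being convex, hence $\tfrac{1}{\sigma} c(\cdot, \xi_i) - \tfrac{1}{2\sigma}\|\cdot\|_2^2$ is convex.

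In the twice-differentiable case the whole argument collapses to a one-line Hessian computation: $\nabla^2 g = \nabla^2 \ln l - \tfrac{1}{\sigma}\nabla^2 c \preceq L I - \tfrac{1}{\sigma} I = \bigl(L - \tfrac{1}{\sigma}\bigr) I \prec 0$, where the final strict inequality is exactly the hypothesis $\sigma < 1/L$. To make this rigorous without assuming a Hessian, I would instead split
\begin{multline*}
-g(u) = \underbrace{\Bigl(\tfrac{1}{2\sigma} - \tfrac{L}{2}\Bigr)\|u\|_2^2}_{\text{strictly convex}} + \underbrace{\Bigl(\tfrac{1}{\sigma}c(u,\xi_i) - \tfrac{1}{2\sigma}\|u\|_2^2\Bigr)}_{\text{convex}} \\ + \underbrace{\Bigl(\tfrac{L}{2}\|u\|_2^2 - \ln l(\var,u)\Bigr)}_{\text{convex}}.
\end{multline*}
The first term is strictly (indeed strongly) convex precisely because $\tfrac{1}{\sigma} > L$; the second is convex by the strong convexity of $c$, and the third by the $L$-smoothness reformulation above. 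A sum of a strictly convex function and convex functions is strictly convex, so $-g$ is strictly convex and $g$ is strictly concave, as claimed. Note the proposition concerns only this bracketed inner objective: the outer $\exp$ and the average over $i$ sit outside the inner $\sup$ in \eqref{eq:log} and are irrelevant to the concavity assertion.

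The one delicate point — and the step I would be most careful about — is the passage from $L$-smoothness to the convexity of $\tfrac{L}{2}\|\cdot\|_2^2 - \ln l(\var,\cdot)$ when $\ln l$ is merely gradient-Lipschitz rather than twice differentiable. This follows from the descent inequality $|\,f(y) - f(x) - \langle \nabla f(x), y-x\rangle\,| \le \tfrac{L}{2}\|y-x\|_2^2$ (the ``$\le$'' half gives monotonicity of $\nabla(\tfrac{L}{2}\|\cdot\|_2^2 - f)$, hence its convexity), which I would cite rather than reprove. I would also flag the standing assumption that $l(\var, \cdot) > 0$ on \domain{} so that $\ln l$ is well defined, and that the $1$-strong convexity of $c(\cdot, \xi_i)$ holds uniformly in $\xi_i$; both are benign for the costs considered (e.g.\ $c$ the squared Euclidean distance has a constant positive-definite Hessian).
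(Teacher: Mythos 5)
Your proof is correct, and it is considerably more careful than the paper's, whose entire proof reads: ``an exercise of calculus, e.g., by using the Taylor expansion of $\ln l(u) - \frac{1}{\sigma}c(u,\xi_i)$ w.r.t.\ the variable $u$.'' The Taylor-expansion route the paper gestures at is precisely your one-line Hessian computation $\nabla^2 g \preceq L I - \frac{1}{\sigma} I \prec 0$, valid when $\ln l(\var,\cdot)$ and $c(\cdot,\xi_i)$ are twice differentiable. What your argument adds is the decomposition of $-g$ into a strongly convex quadratic with modulus $\frac{1}{\sigma}-L>0$ plus two convex remainders, which removes the $C^2$ assumption entirely: it needs only the gradient-Lipschitz form of $L$-smoothness (via monotonicity of the gradient of $\frac{L}{2}\|\cdot\|_2^2 - \ln l(\var,\cdot)$) and the defining inequality of $1$-strong convexity of $c$. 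You also correctly isolate that only the upper-curvature half of $L$-smoothness is used, and you flag the positivity of $l$ needed for $\ln l$ to be well defined --- hygiene the paper skips over. In short: the paper's sketch buys brevity at the cost of an implicit twice-differentiability assumption and an unstated computation; your decomposition buys generality and constitutes an actual proof of the stated proposition.
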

In particular, Gaussian kernels with appropriately chosen bandwidths satisfy the assumptions in Proposition~\ref{thm:convexify}.
The intuition is that, by moving the exponentiation outside, we see the convexification mechanism of \sdro more clearly:
compared with \wdro, \sdro can be viewed as applying the \ctran as robustification in log scale.
The log-transform in \eqref{eq:log} also gives us an intuition for choosing the kernel bandwidth $\sigma$ using the practice in proximal algorithms (Moreau-Yosida regularization).
Alternatively, \eqref{eq:log} can also be seen as optimizing a (differentiable) \emph{softmax} version of the worst-case loss $\sup_{\xi\in\domain}\lnlsighat (\var, \xi)$.

\subsection{Certifying robustness under distribution shift using OT}
\label{sec:theory}
\newcommand{\raten}{\ensuremath{r_{N,\delta}}} %
So far, it is not immediately clear how we can produce a certificate for the amount of robustness against distribution shift, or how to measure the distribution shift, e.g., in what metric?
This section exploits the connection of our proposed \ktran and the \ctran to provide answers to those questions.

Specifically, we bound the quantity when considering an arbitrary distribution $P$ that differs from the true data-generating distribution \ptrue.
We then show that the \sdro procedure \eqref{eq:ksmooth} produces a robustness certificate in the setting of distribution shift.

In the following, we assume that the \ctran $\lnlsighat (\var, \xi)$ (see \eqref{eq:log}) is bounded, i.e., $\exists M>0$ such that
    $
    | \lnlsighat (\var, \xi) |\leq M, \forall \xi\in\domain.
    $
As we have already shown that \lnlsighat{} is a \emph{majorant} of $\ln l$, the assumption also implies
	$
    l (\var, \xi) \leq e^{M}.
    $
Intuitively, our analysis of \sdro certifies the robustness in log scale.
Nonetheless, since the log function is monotone, we can still use OT distances to control the generalization under distribution shift.
To ease the notation, put $\raten:=\sqrt{\frac{\ln(1/\delta)}{n}}$.
\newcommand{\radm}{\ensuremath{\operatorname{\mathcal{R}_N}}}
\newcommand{\radmacherLog}{\ensuremath{\radm(\{\lnlsighat(\var, \cdot) | \var \in \Theta\})}}
\begin{proposition}
    [Certifying robustness against distribution shift]
    For any \var pointwise, 
    there exists a constant $C>0$ such that,
    $\forall \rho > 0$, any probability measure $P$, and kernel bandwidth $\sigma>0$, the following holds except probability $\delta$:
    \begin{multline}
    	    	\label{eq:certificate}
        \sup_{\gamma(P,\ptrue)\leq\rho}
        \expected{P}{\ln l (\var, \xi)}
        \leq\\
        \ln
        \Bigg\{
        \underbrace{
        \frac1N\sum_{i=1}^N l^k_\var(\xi_i)
        }_{\textrm{\sdro{} objective}}
        \Bigg\}
        + \frac{\rho}{\sigma}
        +C\cdot \raten,
    \end{multline}
    where $\gamma$ is the \wsd associated with transport cost $c$.

    Furthermore,
    there exits a constant $C'$ that does not depend on \var
    such that
    the following holds except probability $\delta$:
    \begin{multline}
        \sup_{\gamma(P,\ptrue)\leq\rho}
        \expected{P}{\ln l (\var, \xi)}
        \leq
        \\
        \ln
        \Bigg\{
        \frac1N\sum_{i=1}^N l^k_\var(\xi_i)
        \Bigg\}
        + \frac{\rho}{\sigma}+C'\cdot \raten
        \\
        +2\cdot \radmacherLog
        ,
    \end{multline}
    where \radm denotes the Rademacher complexity.
    \label{thm:generalization}
\end{proposition}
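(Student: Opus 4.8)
The plan is to reduce the certificate to three ingredients: a \wdro duality step that converts the worst-case expectation over the ball $\{\gamma(P,\ptrue)\leq\rho\}$ into a population expectation of the \ctran, a concentration step that replaces that population expectation by its empirical average, and finally Jensen's inequality to match the logarithm-of-average form of the \sdro objective in \eqref{eq:ksmooth}. The starting point is the log-scale rewriting \eqref{eq:intro_ctran_lnl}--\eqref{eq:log}, which identifies $\lnlsighat(\var,\xi)=\ln l^k_\var(\xi)=\sup_u\{\ln l(u)-\tfrac{1}{\sigma}c(u,\xi)\}$ as the (negative) \ctran of $\ln l(\var,\cdot)$ at scale $\lambda=1/\sigma$.

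First I would handle the worst-case expectation. Writing $g:=\ln l(\var,\cdot)$ and using the Lagrangian relaxation of the Wasserstein ball, for every multiplier $\lambda\ge 0$ weak duality gives
\[
\sup_{\gamma(P,\ptrue)\leq\rho}\expected{P}{g}\ \leq\ \lambda\rho + \sup_P\{\expected{P}{g}-\lambda\,\gamma(P,\ptrue)\}.
\]
Choosing the \emph{fixed} multiplier $\lambda=1/\sigma$ --- the choice that matches the kernel bandwidth and reflects the giving-up of exact $\epsilon$-minimization discussed after \eqref{eq:kdro_dual} --- I would then evaluate the inner penalized supremum. Expanding $\gamma$ as an infimum over couplings, disintegrating the coupling against its $\ptrue$-marginal, and interchanging the pointwise supremum with the integral yields
\[
\sup_P\{\expected{P}{g}-\tfrac{1}{\sigma}\gamma(P,\ptrue)\} = \expected{\ptrue}{\sup_u\{g(u)-\tfrac{1}{\sigma} c(u,\xi)\}} = \expected{\ptrue}{\lnlsighat(\var,\xi)}.
\]
This is the standard optimal-transport DRO reformulation (cf.\ \citep{gaoDistributionallyRobustStochastic2016,blanchetQuantifyingDistributionalModel2017,sinhaCertifyingDistributionalRobustness2017}); compactness of \domain and continuity of $l,c$ guarantee the supremum is attained, so the interchange is legitimate.

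Next I would pass from the population expectation $\expected{\ptrue}{\lnlsighat(\var,\xi)}$ to the empirical average $\frac1N\sum_i\lnlsighat(\var,\xi_i)$. Because the \ctran is bounded by $M$ by assumption, the summands lie in $[-M,M]$; for a \emph{fixed} \var Hoeffding's inequality then gives, except with probability $\delta$, $\expected{\ptrue}{\lnlsighat(\var,\xi)}\le \frac1N\sum_i\lnlsighat(\var,\xi_i)+C\cdot\raten$ with $C=\sqrt{2}\,M$, which yields the first (pointwise) bound. For the second bound I would instead control the empirical process uniformly over $\Theta$ by the standard symmetrization argument: the bounded-differences (McDiarmid) inequality applied to $\sup_\var\{\expected{\ptrue}{\lnlsighat}-\frac1N\sum_i\lnlsighat\}$ produces the $C'\cdot\raten$ term with $C'$ independent of \var, and symmetrization bounds the expected supremum by $2\,\radmacherLog$.

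Finally, using $\lnlsighat(\var,\xi_i)=\ln l^k_\var(\xi_i)$ together with concavity of the logarithm, Jensen's inequality gives $\frac1N\sum_i\ln l^k_\var(\xi_i)\le\ln\big(\frac1N\sum_i l^k_\var(\xi_i)\big)$, converting the empirical average of logs into the logarithm of the \sdro objective and completing both displays. I expect the only genuinely delicate step to be the duality/interchange: justifying the exchange of the pointwise supremum and the $\ptrue$-integral, equivalently that a measurable worst-case coupling concentrating at the inner maximizer exists. Under the paper's standing assumptions --- compact \domain, bounded continuous $l$, continuous transport cost $c$ --- this follows from the measurable-selection results underlying \wdro duality, so the remaining concentration and Jensen steps are routine.
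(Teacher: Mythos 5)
Your proposal is correct and follows essentially the same route as the paper's proof: Wasserstein duality with the fixed multiplier $1/\sigma$ to bound the worst-case expectation by $\expected{\ptrue}{\lnlsighat(\var,\xi)} + \rho/\sigma$, concentration of the bounded \ctran (your Hoeffding step is the same bound the paper gets from McDiarmid), Jensen's inequality on the concave logarithm to recover the \sdro objective, and symmetrization plus McDiarmid applied to the supremum over $\Theta$ for the uniform statement with the Rademacher term. The only cosmetic difference is that you unpack the duality step (weak duality plus the coupling-disintegration/measurable-selection argument) where the paper simply cites the strong-duality results of \citep{mohajerinesfahaniDatadrivenDistributionallyRobust2018,gaoDistributionallyRobustStochastic2016,zhaoDatadrivenRiskaverseStochastic2018}.
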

The first two (non-diminishing) terms of the bound in \eqref{eq:certificate} RHS,
$\ln
\Bigg\{
\frac1N\sum_{i=1}^N l^k_\var(\xi_i)
\Bigg\}
+ \frac{\rho}{\sigma}$, give us a \emph{computable robustness certificate} under the distribution shift from $\ptrue$ to arbitrary $P$.
This is in line with the robustness certificate of \citep{sinhaCertifyingDistributionalRobustness2017,leeMinimaxStatisticalLearning2018a}, and different from typical statistical learning theory bounds.
Furthermore, the robustness certificate is simply the log-transform of the \sdro objective plus the \emph{regularization} term $ \frac{\rho}{\sigma}$.

\begin{remark}
    The bound for Rademacher complexity of common function classes is a well-studied topic in statistical learning theory.
    It also follows the Lipschitz composition rule.
    For example, for model classes such as RKHS functions with bounded norms $\{f\in\rkhs|\hnorm{f}\leq R\}$ for $R>0$ and bounded kernels, the Rademacher complexity decays at the rate of $\frac{1}{\sqrt{N}}$.
    However, as mentioned earlier, we are interested in the function spaces that are more general and hence do not further expand on bounding the Rademacher term using specific spaces.
    We refer to more specialized texts such as \citep{vaartWeakConvergenceEmpirical2013} for more details and \citep{sinhaCertifyingDistributionalRobustness2017} for a recent application to robustness certificate.
\end{remark}

\paragraph{Unifying DRO using OT and kernel methods}
    Our analysis above establishes a non-trivial connection between two branches of DRO research using OT (e.g., \citep{mohajerinesfahaniDatadrivenDistributionallyRobust2018,sinhaCertifyingDistributionalRobustness2017}) and kernel methods \citep{zhuKernelDistributionallyRobust2020d,staibDistributionallyRobustOptimization2019}. In the center stage is the kernel bandwidth parameter $\sigma$.
    
    From the OT perspective,
    we see that larger $\sigma$ in \sdro corresponds to smaller scaling parameters in the c-transform for OT, which is known to lead to more conservatism in \wdro (since we down-weigh the transportation cost, resulting in larger ambiguity region). 
    
    On the other hand, and from the kernel perspective, the authors of \citep{zhuKernelDistributionallyRobust2020d} use functional analysis arguments to characterize that conservative \kdro can be a consequence of using small RKHSs as the dual spaces for DRO, which are associated with kernels with larger bandwidth $\sigma$.
	
	In summary, through the bandwidth parameter $\sigma$, this paper unifies the DRO performance-robustness trade-off for both OT and kernel methods.

\section{Numerical Experiments}
\label{sec:exp}
In this section, we empirically demonstrate that \sdro can easily work with DNN models, which is a limitation of typical existing DRO reformulation techniques.
Our selection of neural architectures is meant to demonstrate the algorithmic robustification effect instead of achieving state-of-the-art benchmarks.
To that end, we also ablate factors known to influence robustness, such as dropout if not specified.
More details on our experimental setup, hyper-parameter selection, and additional experimental results can be found in the appendix.
\begin{figure}[b!]
	\centering
    \vspace{-0.5cm}
	\begin{subfigure}{0.49\textwidth}
        \centering
        \includegraphics[width=2.7in,valign=c]{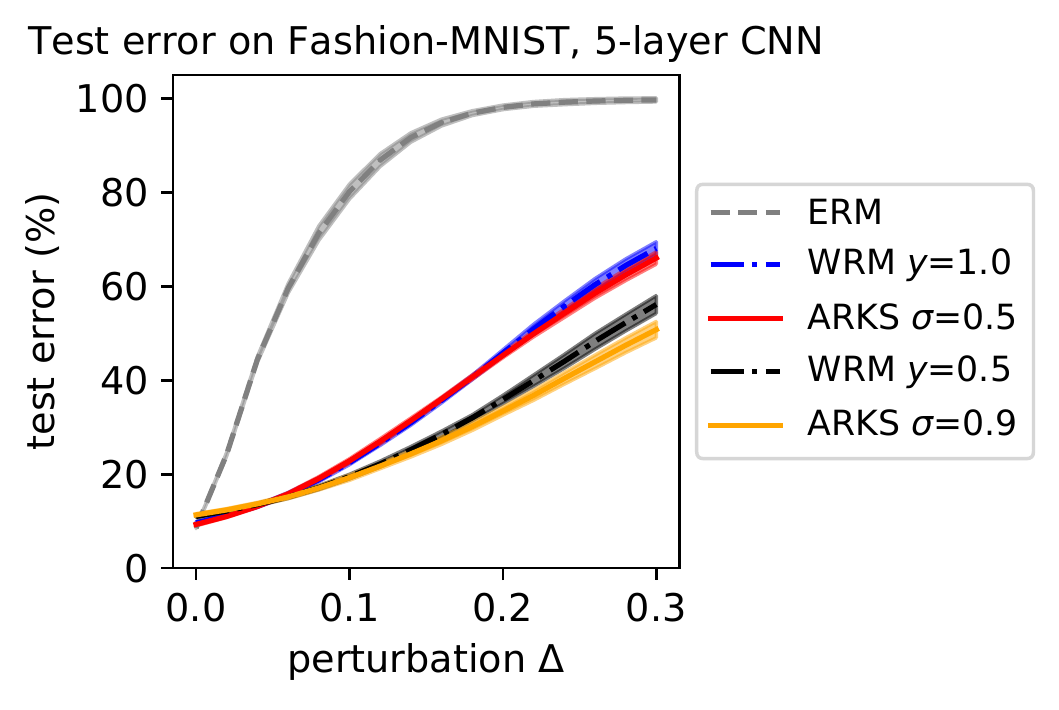}
    \end{subfigure}
    	\begin{subfigure}{0.49\textwidth}
        \centering
        \includegraphics[width=2.7in,valign=c]{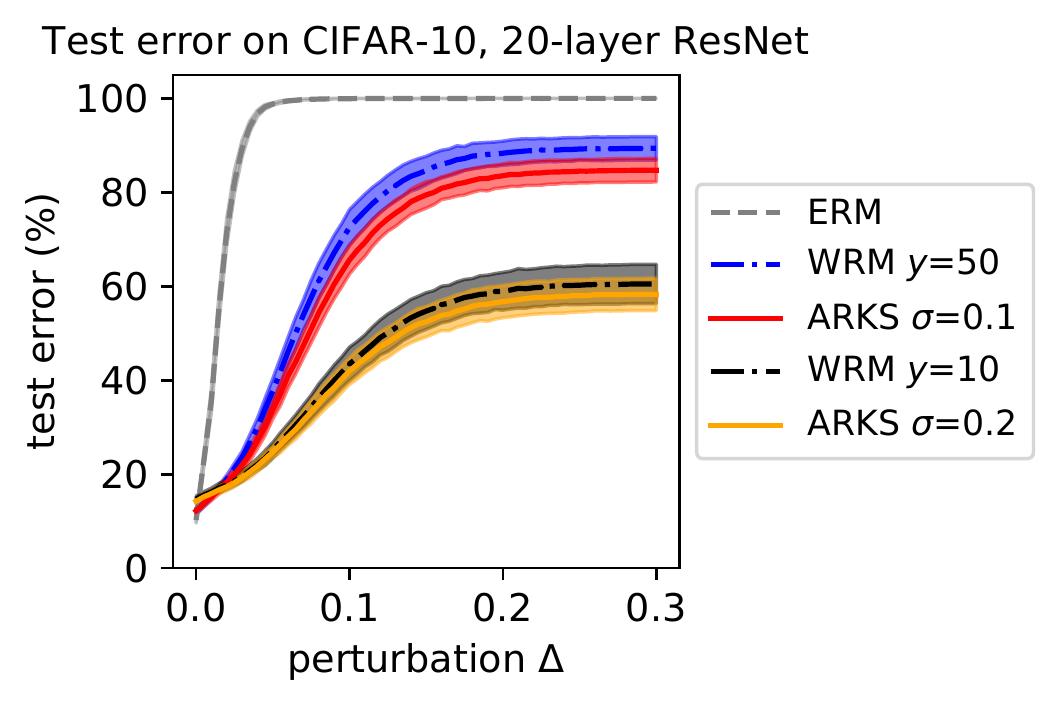}
    \end{subfigure}
    \caption{Black-box PGD attack with respect to $\| . \|_\infty$ on the Fashion-MNIST (\textbf{top}) and CIFAR-10 (\textbf{bottom}) datasets. We show the classification error on perturbed test images versus the allowed magnitude of the adversarial perturbation  $\Delta$. \sdro and WRM exhibit similar adversarial performance profiles; \sdro becomes more robust as the kernel width $\sigma$ increases, while WRM improves with a lower Lagrangian penalty $y$. For all algorithms, we report the mean and standard deviation across 10 random seeds.}
    \label{fig:pgd_attacks}
\end{figure}
\subsection{Robust Learning with DNNs}
\label{subsec:exp_adv}
We compare the following algorithms when applicable: \emph{(A)} \sdro~Algorithm~\ref{alg:smooth}, \emph{(B)} empirical risk minimization (ERM) and \emph{(C)} WRM~\citep{duchiStatisticsRobustOptimization2018}, as well as \emph{(D)} projected gradient descent (PGD) for training \citep{madryDeepLearningModels2019} (reported in the appendix since it is based on RO instead of DRO; We also refer to \citep{sinhaCertifyingDistributionalRobustness2017} for extensive comparisons of PGD against WRM) and \emph{(E)} (worst-case) robust optimization~\citep{ben-talRobustOptimization2009a,soysterTechnicalNoteConvex1973} (reported in the appendix since it is not applicable for deep learning tasks).
We do not test classical \wdro algorithms, e.g. \citep{mohajerinesfahaniDatadrivenDistributionallyRobust2018,zhaoDatadrivenRiskaverseStochastic2018,shafieezadeh-abadehRegularizationMassTransportation2019}, since they cannot be applied to our test settings with general losses and DNN models.
We further note that the classical type-2 \wdro reformulation is equivalent to WRM with the optimal dual variable.

In our evaluation, the test data is perturbed with worst-case disturbances $\delta$ within a box $\{\delta: \| \delta\|_{\infty} \leq \Delta  \}$. $\delta$ is generated by attacking the model trained with ERM (for each random seed) using the PGD algorithm. This type of attack is referred to as \textit{black-box}. The experiment is also performed with black-box fast-gradient sign method (FGSM) \citep{goodfellowExplainingHarnessingAdversarial2015} attacks with respect to $\| .\|_{\infty}$, as well as white-box PGD attacks; the results are included in the appendix. We note that our focus is to demonstrate the robustification effect of the algorithm in a known environment instead of benchmarking various attacks exhaustively.

\textbf{Fashion-MNIST \citep{xiao2017fashionmnist} with CNN.}
The top panel of Figure~\ref{fig:pgd_attacks} shows the classification error for increasing perturbation magnitude $\Delta$. We observe that ERM attains good performance when there is no perturbation but quickly underperforms as $\Delta$ increases. \sdro and WRM yield improved robustness while also achieving low test error under no perturbation. 
\sdro and WRM exhibit similar performance profiles; see the caption of Figure~\ref{fig:pgd_attacks}.
To conclude, \sdro performs at least competitively with WRM.

\textbf{CIFAR-10 \citep{cifar} with ResNet-20.}
The above experiment is repeated for the CIFAR-10 dataset. We choose a deeper architecture, the ResNet-20 \citep{he2015deep} with batch normalization \citep{ioffe2015batch} and ReLU activations. During training, we noticed that WRM might require tuning $y$ to be arbitrarily large for stable performance under highly non-smooth losses, while $\sigma$ can be easily tuned within a small range. The results are shown on the bottom panel of Figure~\ref{fig:pgd_attacks}: \sdro exhibits improved robustness under adversarial perturbations with little to no training performance sacrifice and is at least competitive with WRM.

\textbf{CelebA \citep{liu2015faceattributes} with CNN.}
Experimental results -- similar to other datasets -- can be found in the appendix, while Figure~\ref{fig:celeba_imgs} illustrates examples of perturbed images generated during training. 
\begin{figure}[h!]
	\centering
	\begin{subfigure}{0.49\textwidth}
        \centering
        \includegraphics[width=2.7in,valign=c]{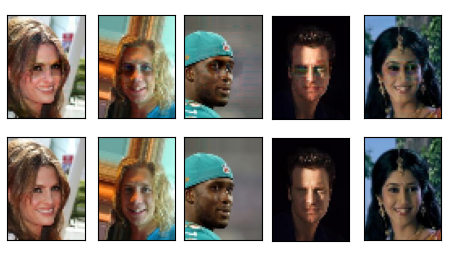}
    \end{subfigure}
    \caption{\textbf{(top)} Perturbed images ($u^{*}$) maximizing the inner optimization of \sdro in CelebA binary classification. \textbf{(bottom)} Unperturbed counterpart. We observe that \sdro generates worst-case perturbations by creating interference around the eyes, reducing apparent separation between the two classes (with or without eye-wear).}
    \label{fig:celeba_imgs}
\end{figure}

\section{Discussion}
\label{sec:discus}
In this paper, we propose the \sdro algorithm using tools from convex analysis, kernel smoothing, and robust optimization.
We have demonstrated state-of-the-art performance in benchmarks of learning under distribution shifts, especially with DNN models that can not be treated with typical \wdro convex reformulation techniques.
Furthermore, we have provided guarantees that certify robustness under distribution shift.

A future direction is to design specific kernels for robust learning, especially for incorporating rich descriptions of interventions in the real world beyond the typical norm-ball perturbation.
For example, we can similarly design the transport cost in our \ckrnl (Definition~\ref{def:ckernel}) to be the data-dependent Mahalanobis distance to protect against distribution shift for causal inference as in, e.g., \citep{heinze-demlConditionalVariancePenalties2021}.

\section*{Acknowledgements}
We thank the anonymous reviewers for their constructive comments during the review process.
We also thank 	
Simon Buchholz
for sending us helpful feedback on the initial manuscript.
This project received support from the German Federal Ministry of Education and Research (BMBF): Tübingen AI Center, FKZ: 01IS18039B.

\bibliography{bib_idro, bib_idro2, bib_neurips}
\newpage
\onecolumn
\appendix
\begin{center}
  {\LARGE{}Appendix: Kernel Robust Smoothing}{\LARGE\par}
\par\end{center}
\newcommand{\lu}{l(u)}
\newcommand{\ku}{k(u,x)}
\newcommand{\df}{\frac{d}{du}f(u)}
\newcommand{\dl}{\frac{d}{du}l(u)}
\newcommand{\dk}{\frac{d}{du}k(u,x)}
\newcommand{\ddf}{\frac{d^2}{du^2}f(u)}
\newcommand{\ddl}{\frac{d^2}{du^2}l(u)}
\newcommand{\ddk}{\frac{d^2}{du^2}k(u,x)}
\newcommand{\rbf}{e^{-{(u-x)^2}/2\sigma }}

\paragraph*{Notation and background.}
Throughout the appendix, we will consider the cases where the inner suprema of minimax problems are attained.
Without further specifications, we consider the default kernel choice to be the \gkrnl $k_\sigma(u,x)=e^{-{{\|u-x\|}^2_2}/2\sigma}$ (or the \lapkrnl) in the rest of the appendix. We suppress the kernel bandwidth $\sigma$ when there is no ambiguity in the context.

\section{Proofs of theoretical guarantees}
\subsection{Proof of Proposition~\ref{thm:convexify}}
\begin{proof}
	The proof is an exercise of calculus, e.g., by using the Taylor expansion of 
        $
        \ln l(u)
        -\frac{1}{\sigma}c(u,\xi_i)
        $ w.r.t. the variable $u$.
\end{proof}

\subsection{Proof of Proposition~\ref{thm:generalization}}
As preparation, we first establish a standard concentration result in this paper's context.
\begin{lemma}
        [Concentration]
        For any \var pointwise, 
        there exists constant $C>0$ such that,
        $\forall \rho > 0$, probability measure $P$, and kernel bandwidth $\sigma>0$, the following holds except probability $\delta$:
        \begin{equation*}
            \expected{\ptrue}{\lnlsighat (\var, \xi)}
            \leq
            \frac1N\sum_{i=1}^N \lnlsighat (\var, \xi_i)
            +C\cdot\raten.
        \end{equation*}
        \label{thm:moreau_concentrate}
\end{lemma}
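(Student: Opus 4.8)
The plan is to recognize this as nothing more than a one-sided Hoeffding concentration for a bounded i.i.d.\ sample, with the parameters $\rho$ and $P$ playing no active role and $\sigma$ entering only through the need for a constant that is uniform in the bandwidth. First I would fix \var and set $Z_i := \lnlsighat(\var, \xi_i)$. Since the points $\xi_1,\dots,\xi_N$ forming \pemp are i.i.d.\ draws from \ptrue, the $Z_i$ are i.i.d.\ real random variables whose common mean is $\expected{\ptrue}{\lnlsighat(\var, \xi)}$ — precisely the left-hand side of the claimed inequality.

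The key structural input is the standing boundedness assumption $|\lnlsighat(\var, \xi)| \leq M$ for all $\xi\in\domain$, which confines each $Z_i$ to an interval of length at most $2M$. I would emphasize that this range is uniform in $\sigma$: because $\lnlsighat(\var, \xi) = \sup_u\{\ln l(u) - \tfrac1\sigma c(u,\xi)\}$ is squeezed between $\ln l(\xi)$ and $\sup_u \ln l(u)$, the same range of $\ln l$ controls every $Z_i$ regardless of the bandwidth. Consequently a single constant suffices for all $\sigma$, while $\rho$ and $P$ never enter the estimate at all — they are carried along only for consistency with Proposition~\ref{thm:generalization}.

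Next I would apply the one-sided Hoeffding inequality to the centered empirical mean: for i.i.d.\ variables of range $2M$,
\begin{equation*}
    \mathbb{P}\left( \expected{\ptrue}{\lnlsighat(\var, \xi)} - \frac1N\sum_{i=1}^N Z_i \geq t \right)
    \leq \exp\left(-\frac{N t^2}{2M^2}\right).
\end{equation*}
Setting the right-hand side equal to $\delta$ and solving yields $t = M\sqrt{2\ln(1/\delta)/N} = \sqrt{2}\,M\cdot\raten$. Taking complements gives, with probability at least $1-\delta$,
\begin{equation*}
    \expected{\ptrue}{\lnlsighat(\var, \xi)}
    \leq \frac1N\sum_{i=1}^N \lnlsighat(\var, \xi_i) + C\cdot\raten,
\end{equation*}
with $C = \sqrt{2}\,M$, which is exactly the asserted bound.

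There is no genuine obstacle here; the lemma is essentially bookkeeping that isolates the statistical term of the main proposition. The only point requiring care is the quantifier structure: the constant $C$ is allowed to depend on $M$ (hence implicitly on the loss and on \var through the range of $\ln l(\var,\cdot)$), but it must \emph{not} depend on $\sigma$, $\rho$, or $P$, which I verify through the uniform squeeze bound above. This pointwise-in-\var statement is precisely what feeds the first display of Proposition~\ref{thm:generalization}; the uniform-over-$\Theta$ strengthening, which instead demands a Rademacher-complexity argument, is treated separately in that proposition's second display.
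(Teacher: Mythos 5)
Your proof is correct and takes essentially the same route as the paper: the paper invokes McDiarmid's bounded-differences inequality for the empirical mean of the bounded functions $\lnlsighat(\var,\cdot)$, which in this i.i.d.\ bounded setting is exactly the one-sided Hoeffding bound you apply, giving the stated inequality with $C=\sqrt{2}\,M$. Your additional observation that the squeeze $\ln l(\var,\xi)\leq\lnlsighat(\var,\xi)\leq\sup_u \ln l(\var,u)$ makes the range, and hence $C$, uniform in $\sigma$ is a useful refinement that the paper leaves implicit.
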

\begin{proof}
Since the function of interest $\lnlsighat(\theta, \cdot)$ satisfies the bounded difference condition by assumption, 
the lemma statement follows directly from the McDiarmid's inequality.
\end{proof}

We now prove Proposition~\ref{thm:generalization}.
\begin{proof}
        By the strong duality of DRO using \wsd (see, e.g., \cite{mohajerinesfahaniDatadrivenDistributionallyRobust2018,gaoDistributionallyRobustStochastic2016,zhaoDatadrivenRiskaverseStochastic2018}) and the \ctran notation,
        we have, $\forall \sigma > 0, \rho >0, \var\in\Theta$,
        \begin{equation}
            \label{eq:duality_concentration}
            \sup_{\gamma(P,\ptrue)\leq\rho}
            \expected{P}{\ln l (\var, \xi)}
            =
            \inf_{\sigma > 0}
            \Bigg\{
                \expected{\ptrue}\lnlsighat(\var, \xi)
                + \frac{\rho}{\sigma}      
            \Bigg\}
            \leq
            \expected{\ptrue}\lnlsighat(\var, \xi)
            + \frac{\rho}{\sigma}.
        \end{equation}
        By Lemma~\ref{thm:moreau_concentrate}, for any fixed \var, the RHS above is bounded by
        \begin{equation*}
            \frac1N\sum_{i=1}^N \lnlsighat (\var, \xi_i)
            + \frac{\rho}{\sigma}
            +C\cdot \raten
            \leq
            \ln\frac1N\sum_{i=1}^N \exp\Bigg\{\lnlsighat (\var, \xi_i)\Bigg\}
            + \frac{\rho}{\sigma}
            +C\cdot \raten
            ,
        \end{equation*}
        where the inequality is due to the concavity of the log function.
        Noting the quantity inside the logarithm is simply the \sdro objective (see, e.g., \eqref{eq:log}), we obtain the first half of the proposition statement.

        We now show the uniform convergence.
        Put
        \begin{equation*}
            F_N : = \sup_{\var\in\Theta} 
            \expected{\ptrue}\lnlsighat(\var, \xi)
            -\frac1N\sum_{i=1}^N \lnlsighat (\var, \xi_i)
            .
        \end{equation*}
        Using symmetrization (see, e.g., \cite{vaartWeakConvergenceEmpirical2013}), we have $$\expected{\ptrue}{F_N}\leq 2 \radmacherLog.$$
        By McDiarmid, with $1-\delta$ probability,
        \begin{equation*}
            F_N \leq \expected{\ptrue}{F_N} + C'\cdot \raten,
        \end{equation*}
        where $C'$ does not depend on \var due to the $\sup$ operation.
        Combining the above relationships with \eqref{eq:duality_concentration} yields the uniform bound.
    \end{proof}

\section{Additional technical details}
\subsection{Equivalence of type-1 \wdro to IPM-DRO}
\begin{lemma}(Motivating example using type-1 \wdro)
	\label{thm:w1envelope}
	Suppose the loss function $l(\var,\cdot)$ is $y-$Lipschitz continuous.
	Let variable $f$ in dual IPM-DRO~\eqref{eq:kdro_dual} be set to the $y$-\phenv $f(\cdot):=\sup_u\{
		l(\var, u) - y\cdot \|u - \cdot\|
		\}$.
	Then, \eqref{eq:kdro_dual} is equivalent to the dual formulation of type-$1$ \wdro; cf. \citep{mohajerinesfahaniDatadrivenDistributionallyRobust2018,zhaoDatadrivenRiskaverseStochastic2018,kuhnWassersteinDistributionallyRobust2019}.
\end{lemma}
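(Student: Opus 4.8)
The plan is to write both dual programs in closed form and verify that, under the stated Lipschitz hypothesis, the single feasible choice $f=$ \phenv{} makes their objective values coincide. First I would specialize IPM-DRO \eqref{eq:kdro_dual} to the Kantorovich (type-$1$ \wdro) metric. Since that metric is the IPM generated by $\mathcal F=\{f:\lnorm{f}\leq 1\}$, the seminorm appearing in the regularizer is $\lnorm{\cdot}$ rather than $\hnorm{\cdot}$, so \eqref{eq:kdro_dual} reads $\min_{\var,f}\,\frac1N\sum_{i=1}^N f(\xi_i)+\epsilon\lnorm{f}$ subject to $l(\var,\xi)\leq f(\xi)$ a.e. On the other side, I would recall the classical strong-duality reformulation of type-$1$ \wdro{} \citep{mohajerinesfahaniDatadrivenDistributionallyRobust2018}, namely $\inf_{\lambda\geq 0}\{\lambda\epsilon+\frac1N\sum_{i=1}^N\sup_u\{l(\var,u)-\lambda\|u-\xi_i\|\}\}$, and observe that each inner supremum is exactly a $\lambda$-scaled \ctran{} with the norm cost, i.e. the $\lambda$-\phenv{} evaluated at $\xi_i$.

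Next I would record three elementary properties of the $y$-\phenv{} $f(\cdot)=\sup_u\{l(\var,u)-y\|u-\cdot\|\}$. (i) It is a majorant: evaluating the supremum at $u$ equal to the argument gives $f(\xi)\geq l(\var,\xi)$, so $f$ is feasible in \eqref{eq:kdro_dual}. (ii) It is $y$-Lipschitz, being a supremal convolution with the cost $y\|\cdot\|$, hence $\lnorm{f}\leq y$. (iii) The Lipschitz hypothesis forces the supremum to be attained at the argument itself: for every $u$, $l(\var,u)-y\|u-\xi\|\leq l(\var,\xi)+y\|u-\xi\|-y\|u-\xi\|=l(\var,\xi)$, so $f=l(\var,\cdot)$ identically and $\lnorm{f}=\lnorm{l(\var,\cdot)}=y$.

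With these facts the two objectives align. Substituting $f$ into the Kantorovich IPM-DRO objective and using (iii) gives $\frac1N\sum_{i=1}^N l(\var,\xi_i)+\epsilon y$. For the \wdro{} dual, evaluating at $\lambda=y$ and using (iii) again collapses each inner \ctran{} to $l(\var,\xi_i)$, producing the same value $\frac1N\sum_{i=1}^N l(\var,\xi_i)+\epsilon y$; by Kantorovich--Rubinstein this equals the worst-case risk $\sup_{\gamma(P,\pemp)\leq\epsilon}\expected{P}{l(\var,\xi)}=\frac1N\sum_{i=1}^N l(\var,\xi_i)+\epsilon\lnorm{l(\var,\cdot)}$, which certifies that $\lambda=y$ attains the infimum. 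Thus the chosen $f$ realizes the optimum of \eqref{eq:kdro_dual} and its value matches the \wdro{} dual, establishing the equivalence. More conceptually, the $\lambda$-\phenv{} is the pointwise-smallest $\lambda$-Lipschitz majorant of $l$, so the substitution $f\leftrightarrow f_\lambda$, $\lnorm{f}\leftrightarrow\lambda$ identifies the two programs term by term and not merely at the optimizer.

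The main obstacle is not any single computation but correctly lining up the two dual pictures: one must (a) replace the RKHS seminorm $\hnorm{\cdot}$ in \eqref{eq:kdro_dual} by the Lipschitz seminorm $\lnorm{\cdot}$ that is the genuine dual object of the Kantorovich metric, and (b) argue that $\lambda=y$ is the \emph{optimal} multiplier, not merely a feasible one. Step (b) is where the Lipschitz assumption does the real work, since it both collapses the \ctran{} to $l$ and, via Kantorovich--Rubinstein, pins the worst-case risk to $\frac1N\sum_{i=1}^N l(\var,\xi_i)+\epsilon\lnorm{l(\var,\cdot)}$; care is needed to state the equivalence with $y$ equal to the exact (tight) Lipschitz constant $\lnorm{l(\var,\cdot)}$, so that the regularizer $\epsilon\lnorm{f}$ produced by the envelope agrees with the transport penalty $\epsilon y$.
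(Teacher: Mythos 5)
Your proposal follows essentially the same route as the paper's proof: establish that the Pasch--Hausdorff envelope is a majorant of $l$ (hence feasible), is $y$-Lipschitz (indeed the smallest $y$-Lipschitz majorant), and coincides with $l(\var,\cdot)$ under the Lipschitz hypothesis, then substitute it into \eqref{eq:kdro_dual} with the Lipschitz seminorm in place of $\hnorm{\cdot}$ to recover the type-1 \wdro dual objective $\frac1N\sum_{i=1}^N\sup_u\{l(\var,u)-y\|u-\xi_i\|\}+\epsilon y$. One caveat on your extra refinement: the paper stops at identifying this expression (the dual evaluated at multiplier $y$) and never argues that $\lambda=y$ attains the infimum, and your Kantorovich--Rubinstein equality $\sup_{\gamma(P,\pemp)\leq\epsilon}\expected{P}{l(\var,\xi)}=\frac1N\sum_{i=1}^N l(\var,\xi_i)+\epsilon\lnorm{l(\var,\cdot)}$ holds only as an inequality on the compact domains the paper assumes (the worst-case shift need not be realizable), so that step should be weakened to an inequality or omitted.
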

Unfortunately, estimating Lipschitz constants for general model classes is known to be difficult \citep{virmauxLipschitzRegularityDeep2018,biettiKernelPerspectiveRegularizing2019},
resulting in the intractability of \wdro when used with common machine learning models, e.g., neural networks, which our method in this paper can handle.

\subsection{Proof of Lemma~\autoref{thm:w1envelope}}
We now prove Lemma~\autoref{thm:w1envelope}. First, it is an exercise to show the following technical lemmas.
\subsubsection{Technical lemma and proofs using convex analysis}
\begin{lemma}
    \label{thm:sup_conv}
    A function's $y$-\phenv dominates itself, i.e.,
    $$
    l_{y,1} (x) \geq l(x), \forall x\in\domain.
    $$
    Furthermore,
    $l_{y,1}$ is the smallest majorant of $l$ with Lipschitz constant $y$.
\end{lemma}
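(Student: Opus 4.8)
The plan is to decompose the lemma into its three constituent claims and verify each from the defining supremum $l_{y,1}(x) = \sup_u\{l(u) - y\|u-x\|\}$ together with the triangle inequality: (i) $l_{y,1}$ dominates $l$; (ii) $l_{y,1}$ is itself $y$-Lipschitz; and (iii) it is minimal among all $y$-Lipschitz majorants. For the domination claim I would simply evaluate the supremum at the feasible point $u = x$: since $\|x-x\| = 0$, the candidate value $l(x) - y\|x-x\|$ equals $l(x)$, so $l_{y,1}(x) \geq l(x)$ for every $x \in \domain$. This disposes of the first displayed inequality and shows $l_{y,1}$ is a majorant.

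Next, to see that $l_{y,1}$ is $y$-Lipschitz, I would fix two points $x, x'$ and, for each $u$, apply the reverse triangle inequality $\|u-x\| \geq \|u-x'\| - \|x-x'\|$. Multiplying by $-y$ and adding $l(u)$ yields $l(u) - y\|u-x\| \leq \bigl(l(u) - y\|u-x'\|\bigr) + y\|x-x'\|$; taking the supremum over $u$ gives $l_{y,1}(x) \leq l_{y,1}(x') + y\|x-x'\|$. Swapping the roles of $x$ and $x'$ and combining the two bounds produces $|l_{y,1}(x) - l_{y,1}(x')| \leq y\|x-x'\|$, i.e. $\lnorm{l_{y,1}} \leq y$, so $l_{y,1}$ is a $y$-Lipschitz majorant.

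Finally, for minimality I would take an arbitrary $y$-Lipschitz majorant $g$ of $l$ (so $g(u) \geq l(u)$ for all $u$ and $\lnorm{g} \leq y$) and show $g \geq l_{y,1}$ pointwise. For any $x$ and any $u$, the Lipschitz property of $g$ gives $g(x) \geq g(u) - y\|u-x\|$, while the majorant property gives $g(u) \geq l(u)$; chaining these yields $g(x) \geq l(u) - y\|u-x\|$. Taking the supremum over $u$ on the right-hand side gives $g(x) \geq l_{y,1}(x)$. Combined with claims (i) and (ii), this establishes that $l_{y,1}$ is the smallest $y$-Lipschitz majorant of $l$.

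The argument is entirely elementary, so there is no substantial obstacle; the only point requiring care is keeping the direction of the triangle inequality consistent in the Lipschitz step and the chaining direction correct in the minimality step. I would also note that the supremum defining $l_{y,1}$ need not be attained, but since every step takes the form ``establish an inequality for each fixed $u$, then pass to the supremum,'' attainment is never needed.
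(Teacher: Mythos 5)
Your proof is correct and follows essentially the same route as the paper's: domination by evaluating the supremum at $u=x$, the Lipschitz bound via the triangle inequality inside the supremum, and minimality by chaining the Lipschitz and majorant properties of an arbitrary competitor $g$ and then taking the supremum. Your added remark that attainment of the supremum is never needed is a fine observation but not a substantive difference.
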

\begin{lemma}
\label{thm:sup_conv_coincide}
    If $l$ is Lipschitz-continuous with constant $y$, then $l_{y,1}$ coincides with $l$.
\end{lemma}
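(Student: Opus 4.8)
The plan is to establish the pointwise equality $l_{y,1}(x) = l(x)$ for all $x \in \domain$ by proving the two opposing inequalities, drawing on the facts already assembled in Lemma~\ref{thm:sup_conv}.

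The inequality $l_{y,1}(x) \geq l(x)$ requires no work: it is precisely the first assertion of Lemma~\ref{thm:sup_conv}, and in any case follows by evaluating the defining supremum at the admissible point $u = x$, which contributes $l(x) - y\|x - x\| = l(x)$. Hence the entire content of the lemma is the reverse inequality $l_{y,1}(x) \leq l(x)$, and this is exactly where the Lipschitz hypothesis must enter.

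For that upper bound I would invoke the $y$-Lipschitz continuity of $l$ directly. For every $u \in \domain$ one has $l(u) - l(x) \leq |l(u) - l(x)| \leq y\|u - x\|$, which rearranges to $l(u) - y\|u - x\| \leq l(x)$. Since this bound holds for every competitor $u$, passing to the supremum on the left preserves it and yields $l_{y,1}(x) = \sup_u\{l(u) - y\|u - x\|\} \leq l(x)$. Combined with the lower bound, this gives $l_{y,1}(x) = l(x)$, as claimed.

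I do not anticipate any real obstacle; the statement is a two-line consequence of the definitions, and the supremum is automatically finite since it is pinned between $l(x)$ from below and $l(x)$ from above. A purely structural alternative, which avoids the explicit Lipschitz computation, is to note that $l$ is by hypothesis a majorant of itself with Lipschitz constant $y$, and then apply the second part of Lemma~\ref{thm:sup_conv}, which identifies $l_{y,1}$ as the \emph{smallest} majorant of $l$ with Lipschitz constant $y$; minimality then forces $l_{y,1} \leq l$ while the majorant property forces $l_{y,1} \geq l$, again giving equality.
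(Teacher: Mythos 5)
Your proof is correct. Note that the paper's own proof is exactly the one-line structural argument you relegate to an ``alternative'' at the end: the paper simply states that the lemma follows directly from Lemma~\ref{thm:sup_conv}, i.e., since $l$ is by hypothesis a $y$-Lipschitz majorant of itself, minimality of $l_{y,1}$ among such majorants forces $l_{y,1} \leq l$, while the majorant property gives $l_{y,1} \geq l$. Your primary argument is a genuinely more self-contained route: the direct Lipschitz computation $l(u) - y\|u - x\| \leq l(x)$ for every $u$, followed by taking the supremum, uses only the trivial lower bound (evaluation at $u = x$) and never touches the ``smallest majorant'' half of Lemma~\ref{thm:sup_conv}, which is the part of that lemma requiring actual work. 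What the paper's route buys is brevity once Lemma~\ref{thm:sup_conv} is in hand; what yours buys is independence from it, since your two-inequality argument would stand even if that lemma were never stated. Both arguments are complete and correct.
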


\newcommand{\ly}[1]{\ensuremath{l_{y, {#1}}}\xspace} %
\newcommand{\ymajor}{$y$-Lipschitz majorant\xspace}
Similar results concerning the \emph{infimal convolution} (instead of supremal) are well-known \cite[Chapter 12]{bauschkeConvexAnalysisMonotone2011}.
For completeness, we give self-contained proofs below.
We assume the regularity condition that
$f(x):=\ly1(x)=\sup_u\{
        l(\var, u) - y\cdot \|u - x\|
\}
< \infty
$; we refer to \cite[Proposition~12.14]{bauschkeConvexAnalysisMonotone2011} for the degenerative case when $\ly1=\infty$ and $l$ has no \ymajor.

We now prove Lemma~\ref{thm:sup_conv}.
\begin{proof}
        By noting the special choice of $u=x$, the relationship $f(x)\geq l(x)$ is obvious. We now prove the Lipschitz continuity.

        For any $x,z$ in the domain,
\begin{multline}
        f(x) =\sup_u\{
                l(u) - y\cdot \|u - x\|\} 
                \geq                \sup_u\{
                        l(u) - y\cdot \|u - z\| - y\cdot\|z - x\|\}
                        =\sup_u\{
                                l(u) - y\cdot \|u - z\|\} - y\cdot\|z - x\|\\
                                = f(z) -y\cdot\|z - x\|.
\end{multline}
Therefore,
        $
        f(z) - f(x)\leq y\cdot\|z - x\|,
        $
$f$ is $y$-Lipschitz.

To show that $f$ is the smallest \ymajor, we let $g$ be any \ymajor of $l$.
Then,
$$
g(x)\geq g(z) - y\cdot\|z - x\| \geq l(z) - y\cdot\|z - x\| .
$$

Take supremum on both sides,
$$
g(x) \geq\sup_z\{ l(z) - y\cdot\|z - x\|\}= f(x) .
$$
Hence, $f$ is the smallest \ymajor.
\end{proof}
Lemma~\ref{thm:sup_conv_coincide} follows directly from Lemma~\ref{thm:sup_conv}.
See, e.g., \cite{bauschkeConvexAnalysisMonotone2011} Chapter 12 for more technical details on the convolution operator.

By plugging in the expression for $l_{y,1}$, we have
\begin{equation}
        \begin{aligned}
                \frac{1}N\sum_{i=1}^N \sup_u\{l(u) - y\cdot \|u - \xi_i\|\}
                + \epsilon y
        \end{aligned}
\end{equation}
We have thus recovered the type-1 Wasserstein DRO dual
as a special case of our analysis.

\subsection{Proof of Proposition~\ref{thm:majorant}}
\newcommand{\ustar}{u^*}
We now verify the relationship
$
        \fsig(x)\geq l(x), \forall x\in\domain,\text{ and } \fsig \to l \textrm{ as } \sigma \to 0.
$
The dominance relationship $\fsig(x)\geq l(x)$ can be seen by taking the special case $u=x$ in the supremum.
Finally, the convergence of $\fsig \to l \textrm{ as } \sigma \to 0$ is obvious by examining the expression of the \gkrnl and \lapkrnl.
\subsection{Proof of Proposition~\ref{thm:recover_ro} (Robustness-performance trade-off using kernel width $\sigma$)}
First, we note the continuity of the \gkrnl and the loss function $l$; hence all limits are attained.
If we let
the kernel width be large $\sigma\to\infty$, 
then
$
 \lim_{\sigma\to\infty} \ku = 1.
$
Hence, the robust learning algorithm recovers the worst-case robust optimization (RO)
$$
\min_\var \sup_\xi l(\var, \xi).
$$

Similarly, if kernel width is small $\sigma\to 0$, then we recover the trivial 
Dirac function at limit
$
 \lim_{\sigma\to0} \ku = \delta_x(u).
$
Hence \sdro becomes the empirical risk minimization (ERM),
$$
\min_\var \frac{1}N\sum_{i=1}^N  l(\var, \xi_i).
$$

\subsection{Alternative analysis on convexity properties of inner optimization problem}
We now provide an alternative view (to Proposition~\ref{thm:convexify}) of
the convexity properties of the objective function of the inner objective of \sdro,
which we denote as $f(u):= l(u)k(u,x)$.
For \sdro, our intuition is that, by multiplying the loss $l(u)$ by a function $k(u,x)$ which is strongly concave near its peak, the resulting function is consequently locally concave too.
This idea is illustrated in Figure~\ref{fig:sgdstep_landscape}.
For conciseness, we assume that the loss function $l$ is positive twice-differentiable (cf. \cite{sinhaCertifyingDistributionalRobustness2017} for why this is not restrictive), and $x, u$ are scalars.
We first show that the inner objective $f(u)=l(u)k(u,x)$ is locally concave in a neighborhood of $x$.
\begin{proof}        
We compute the curvature $\ddf$.
\begin{multline}
    \label{eq:hessian}
    \ddf =\frac{d}{du} \left(\dl \ku + \lu \dk\right),\\
    = \ddl \ku + 2\dl\dk + \lu\ddk\\
    = \rbf \left[ \ddl + 2\dl \left(-{(u-x)}/\sigma\right)
    + \lu\left( -1/\sigma +  (u-x)^2/\sigma^2  \right)
    \right].
\end{multline}

Let us choose $\sigma>0$ small enough such that the following holds.
\begin{equation}
        \frac{d^2}{du^2}l(u) - \lu/\sigma < 0.\label{eq:curvature1}
\end{equation}
This can be done trivially if the curvature of the loss $l$ is bounded (similar to the assumptions in \cite{sinhaCertifyingDistributionalRobustness2017,houskaNonlinearRobustOptimization2013}) and $l(u)>0$.
Then, there exists $\Delta>0$ such that, for $|u-x|\leq\Delta$, the curvature value~\eqref{eq:hessian} is negative. Therefore, the objective $f(u)=l(u)k(u,x)$ is concave in the $\Delta-$neighborhood of $x$.
\end{proof}

We now show that, for a suitable choice of $\sigma$, every stationary point of $f$ is a local maximum, hence explaining the good empirical performance in our experiments.
A full convergence analysis is out of the scope of our current paper.

Let
$$
\sigma^* = \frac{2{(\ustar-x)}^2}{\sqrt{1+4{(\ustar-x)}^2\cdot \frac{d^2}{du^2}l(\ustar)/l(\ustar)}-1},
$$
which is a non-negative quantity if $\ustar\neq x$ and $\frac{d^2}{du^2}l(\ustar) > 0$ by straightforward verification.
\begin{lemma}
        Suppose either the loss $l$ is concave or the bandwidth satisfies $\sigma<\sigma^*$.
        Then, every stationary point of $f$ is a maximum.
\end{lemma}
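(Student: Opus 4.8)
The plan is to establish that at every stationary point $\ustar$ of the inner objective $f(u)=\lu\ku$ the curvature $\ddf$ is strictly negative, so that $\ustar$ is a strict local maximum. The engine of the argument is to use the first-order condition to eliminate the first-derivative term $\dl$ from the Hessian expression~\eqref{eq:hessian}, which collapses the second-order test into the sign of a single scalar quadratic in the bandwidth $\sigma$.

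First I would record the stationarity condition. Since the \gkrnl obeys $\ku=\rbf>0$ everywhere, the factorization $\df=\ku\big(\dl-\lu\,(u-x)/\sigma\big)$ shows that any stationary point satisfies $\frac{d}{du}l(\ustar)=l(\ustar)(\ustar-x)/\sigma$. Substituting this relation into the curvature formula~\eqref{eq:hessian} to rewrite the cross term $2\dl\dk$, and dividing out the strictly positive factor $\ku$, the curvature at $\ustar$ should reduce to
\[
\frac{d^2}{du^2}l(\ustar)-\frac{l(\ustar)(\ustar-x)^2}{\sigma^2}-\frac{l(\ustar)}{\sigma}.
\]
Because $l(\ustar)>0$ by assumption, multiplying through by $\sigma^2$ shows that $\ddf<0$ is equivalent to the scalar inequality $q(\sigma):=\frac{d^2}{du^2}l(\ustar)\,\sigma^2-l(\ustar)\,\sigma-l(\ustar)(\ustar-x)^2<0$.

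Then I would split into the two cases of the statement. If $l$ is concave, then $\frac{d^2}{du^2}l(\ustar)\le 0$, so the leading term of $q$ is nonpositive while the middle term $-l(\ustar)\sigma$ is strictly negative (as $l>0$ and $\sigma>0$); hence $q(\sigma)<0$ with no restriction on $\sigma$. If instead $\frac{d^2}{du^2}l(\ustar)>0$, then $q$ is an upward-opening parabola in $\sigma$ with one negative and one positive root, and it is negative exactly for $\sigma$ below the positive root $\big(l(\ustar)+\sqrt{l(\ustar)^2+4\,\frac{d^2}{du^2}l(\ustar)\,l(\ustar)(\ustar-x)^2}\big)\big/\big(2\,\frac{d^2}{du^2}l(\ustar)\big)$. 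The remaining step is to confirm that this positive root equals $\sigma^*$: rationalizing the definition of $\sigma^*$ by multiplying numerator and denominator by $\sqrt{1+4(\ustar-x)^2\,\frac{d^2}{du^2}l(\ustar)/l(\ustar)}+1$ turns it into exactly this quadratic root. Thus $\sigma<\sigma^*$ yields $q(\sigma)<0$, and both cases give $\ddf<0$.

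The main obstacle is not conceptual but careful bookkeeping: substituting the stationarity relation into~\eqref{eq:hessian} so that the two $(\ustar-x)^2/\sigma^2$ contributions combine with the correct coefficients, and then verifying the rationalization identity that $\sigma^*$ equals the positive root of $q$ without sign slips. I would also flag two edge cases to dispatch explicitly. When $\ustar=x$ the printed formula for $\sigma^*$ has the form $0/0$, but the rationalized root form extends continuously to $l(\ustar)\big/\frac{d^2}{du^2}l(\ustar)$, which is exactly the threshold at which $q(\sigma)=\sigma\big(\frac{d^2}{du^2}l(\ustar)\,\sigma-l(\ustar)\big)$ changes sign. And on the boundary $\frac{d^2}{du^2}l(\ustar)=0$ of the convex branch, $q$ degenerates to a strictly decreasing affine function that is negative for all $\sigma>0$, consistent with $\sigma^*\to\infty$ there.
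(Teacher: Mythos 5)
Your proof is correct and follows essentially the same route as the paper's: substitute the first-order stationarity condition into the Hessian expression \eqref{eq:hessian} to cancel the first-derivative term, reduce the second-derivative test to the sign of $\frac{d^2}{du^2}l(u^*)-l(u^*)\left(1/\sigma+(u^*-x)^2/\sigma^2\right)$, and conclude negativity in both the concave and small-bandwidth cases. The only difference is that you explicitly carry out the algebra the paper leaves as implicit verification — recasting the sign condition as the quadratic $q(\sigma)$, confirming by rationalization that $\sigma^*$ is exactly its positive root, and dispatching the $u^*=x$ and $\frac{d^2}{du^2}l(u^*)=0$ edge cases — which completes rather than alters the paper's argument.
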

\begin{proof}
        Suppose $\ustar$ is a stationary point of $f$, which implies
        \begin{equation*}
                \df\mid_{u=\ustar}=\dl \ku + \lu \dk\mid_{u=\ustar}=0.
        \end{equation*}
        Since $\ku\neq 0$, that further implies
        \begin{equation*}
                \dl  + \lu \left(-{(u-x)}/\sigma\right)\mid_{u=\ustar}=0.
        \end{equation*}
        Plugging the above equality into the last line of \eqref{eq:hessian},
        \begin{equation*}
                \ddf\mid_{u=\ustar}
                = \rbf \left[ \ddl 
    - \lu\left( 1/\sigma +  (u-x)^2/\sigma^2  \right)
    \right]\mid_{u=\ustar}.
        \end{equation*}
        Since either $\frac{d^2}{du^2}l(\ustar)\leq 0$ or $\sigma<\sigma^*$, we have
        \begin{equation*}
                \frac{d^2}{du^2}l(\ustar)
                - l(\ustar)\left( 1/\sigma +  (\ustar-x)^2/\sigma^2  \right) < 0.
        \end{equation*}
        Then,
        \begin{equation*}
                \ddf\mid_{u=\ustar} < 0.
        \end{equation*}
        Therefore, $\ustar$ is a local maximum by the second derivative test of calculus.
\end{proof}
Note that the condition $\sigma<\sigma^*$ can be easily satisfied when $l$ has bounded curvature and is a weaker condition than \eqref{eq:curvature1}.
Therefore, the above lemma implies that a gradient-based algorithm converges to a maximum.
Note that the analysis presented in Proposition~\ref{thm:convexify} is stronger than the lemma above.

\subsection{Additional function approximator and majorant constructions}
We now use smooth majorants, as well as interpolants, to construct robustification methods in addition to the \sdro, e.g.,
\begin{enumerate}[noitemsep,topsep=0pt]
        \item Kernel distance envelope (\cdro)
                \begin{equation}
                        \label{eq:kernelconv}
                        f_{\sigma,y}(x):=\kconv{x}.
                \end{equation}
        \item Kernel interpolant (\idro; $l$ denotes the vector of loss values at some interpolation points
                                $[l(\theta,\xi_1), \dots, l(\theta,\xi_M)]^\top$
                                )
                \begin{equation}
                        \label{eq:interpolant}
                        \hat f(x) = l^\top k(X, X)^{-1} k (X, x).
                \end{equation}
                While \idro is not a strict majorant (since it only interpolates at data sites), we nonetheless show below it can enforce robustness.
\end{enumerate}
Once we take the function approximation perspective, the possibility is by no means limited to those choices. For example, for the inverse multi-quadratic kernels, the approximation
$        
\sup_u \{
                l(\var,u) - y[ 1/ k^2(u,{x}) - C^2])
                \}
$
is equivalent to using the \mreg in type-2 \wdro.
The authors of \citep{zhuKernelDistributionallyRobust2020d} used the RKHS basis expansion
$
\hat f(x) =  \sum_{j=1}^M \alpha_j k (\zeta_j, x)
$,
for some discretization points $\zeta_j$.
Compared with their approach, our choices in \eqref{eq:kernelconv} and \sdro are certified majorants of loss function $l$.
We now examine the specific approximation schemes.

\paragraph*{Kernel distance envelope (\cdro).}
We adopt a similar insight as \wdro in \citep{sinhaCertifyingDistributionalRobustness2017} and propose the function approximator, which is a variant of the \mreg with the kernel distance (using the \gkrnl)
\begin{equation}        
        f_{y,\sigma}(x) = \sup_u \{
                l(\var,u) - y/2\cdot \hnorm{\phi(u) - \phi (x)}^2
                \}
                = \kconv{x}.
\end{equation}
Similar to \sdro, one can verify
\begin{equation*}
        f_{y,\sigma}(x)\geq l(x), \forall x\in\domain,\text{ and } f_{y,\sigma} \to l \textrm{ as } y \to 0,
\end{equation*}
i.e., it is a function majorant of $l$.
Furthermore, $f_{y,\sigma}$ can be viewed as a $y-$Lipschitz continuous mapping from the feature space (i.e., the RKHS \rkhs; see the appendix),
$$
\begin{aligned}
        f_{y,\sigma}(x)-f_{y,\sigma}(z)\leq y \hnorm{\phi{(x)}-\phi{(z)}},\ \forall x,z\in\domain.
\end{aligned}
$$
Analogous to \sdro, we can simply solve the optimization problem with a fixed $y$,
\begin{equation}
        \label{eq:smooth_dro}
        \begin{aligned}
                \min_\var
                \frac{1}N\sum_{i=1}^N 
                \kconv{\xi_i}.
        \end{aligned}
\end{equation}

Similar to \eqref{eq:ksmooth},
for suitable choices of $y,\sigma$,
the inner function of \eqref{eq:smooth_dro}
is locally concave in a neighborhood of $\xi_i$, facilitating gradient based optimization.
Note that \cdro can again be interpreted as adversarial training like \sdro.
The empirical performance of \cdro is similar to \sdro in our experiments.
We thus left it as future work to examine the properties of \cdro in detail.

\paragraph*{Kernel interpolant (\idro).}
We now turn to the \emph{kernel interpolant}, an entirely different scheme from \sdro.
Our main idea in this section is to find a map $L^2\to \rkhs$ such that we can perform robust learning in $\rkhs$. 
For any given \var, we choose the approximation function $f$ to be
the well-known kernel interpolant \citep{wahbaSplineModelsObservational1990} of the loss function
$$
\hat f = l^\top k(X, X)^{-1} k (X, \cdot),
$$
where $l$ is defined in \eqref{eq:interpolant}.
This is also referred to as the kernel ``ridge-less'' regression estimator.
Plugging this interpolant back into the 
IPM-DRO formulation,
we arrive at the regularized risk minimization
\begin{equation}
        \label{eq:kidro}
        \begin{aligned}
                \min_{ \var}
                \frac{1}N\sum_{i=1}^N l(\var, \xi_i)
                + \epsilon\sqrt{l^\top k(X, X)^{-1} l  }.
        \end{aligned}
\end{equation}
Intuitively, this can be seen as performing the following two steps simultaneously: \emph{1)} interpolating the optimization loss $l$ using kernel regression; and \emph{2)} performing regularized risk minimization w.r.t. \var using the interpolant function's RKHS norm.

Alternatively, using the least-squares loss as an example,
$
l(\var,[X,Y]):= (g_\var(X)-Y)^2,
$
we may use $f$ to interpolate the model $g_\var$ only, resulting in
\begin{equation}
\min_{ \var}
\frac{1}N\sum_{i=1}^N l(\var,[x_i,y_i]) + \epsilon \sqrt{[(f(X)-Y)^2]^\top k(X, X)^{-1}[(f(X)-Y)^2]  },
\label{eq:krr}
\end{equation}
where $ f = g_\var(X)^\top k(X, X)^{-1} k (X, \cdot)$. %
In practice, we may also choose to use a regularizer motivated by kernel ridge regression,
$$
\begin{aligned}
\min_{ \var}
\frac{1}N\sum_{i=1}^N (g_\var(X_i)-Y_i)^2 + \lambda{g_\var(X)^\top k(X, X)^{-1} g_\var(X)  }.
\end{aligned}
$$
We refer to \citep{biettiKernelPerspectiveRegularizing2019,xuRobustnessRegularizationSupport2009,staibDistributionallyRobustOptimization2019} for more interpretations of RKHS norm regularization.
\begin{remark}
        While the above formulations, such as \eqref{eq:krr}, resemble the kernel ridge regression (KRR) estimator, they are \textit{not} not the same. Our method can learn with either parametric or non-parametric models with loss $l(\theta, \cdot)$, while KRR only works with kernelized models. For example, we have report experiments with DNNs, which cannot be handled by KRR.
\end{remark}

\section{Experimental set-up and additional results}
\label{sec:appendixc}
In this Section, we provide additional information on the numerical experiments presented in Section~\ref{sec:exp}, and report supplementary material. This includes benchmark results with the PGD adversarial learning algorithm \citep{madryDeepLearningModels2019}, and a brief discussion on its differences with ARKS. All of our experiments are conducted using the PyTorch \citep{NEURIPS2019_9015} and the CVXPY \citep{diamond2016cvxpy} libraries.

\subsection{Robust learning under Adversarial Perturbations}

\paragraph{Datasets.} 
The numerical experiments in Section~\ref{subsec:exp_adv} make use of the following publicly available datasets: Fashion-MNIST\footnote{available at \url{https://pytorch.org/vision/stable/datasets.html\#fashion-mnist}} \citep{xiao2017fashionmnist}, CIFAR-10\footnote{available at \url{https://pytorch.org/vision/stable/datasets.html\#cifar}} \citep{cifar}, and CelebA\footnote{available at \url{https://www.kaggle.com/jessicali9530/celeba-dataset}} \citep{liu2015faceattributes}.

The Fashion-MNIST dataset contains greyscale images of garments from 10 categories. Each image $x$ is represented by $x \in [0, 1]^{28 \times 28}$.

The CIFAR-10 dataset contains colored images of different objects from 10 categories. Each image x is represented by $x \in \left[0, 1 \right]^{32 \times 32 \times 3}$.
As is customary in such settings, we augment the training set with additional samples by randomly cropping and flipping images. 

 Using the provided attributes, the CelebA dataset is reduced to only contain a balanced number of colored images of celebrities with eye-wear (class 1) or without (class 0). Each image $x$ is represented by $x \in [0, 1]^{64 \times 48 \times 3}$. Our codebase includes a script to modify this dataset.

\paragraph{Model architectures.} For Fashion-MNIST, the model architecture consists of two $3 \times 3$ convolutional layers with ELU activations and max pooling, followed by two fully connected layers and a softmax layer. For CIFAR-10 we use the ResNet-20 model architecture \citep{he2015deep}, consisting of 18 convolutional layers with batch normalization \citep{ioffe2015batch} and ReLU activations, followed by a fully connected and a softmax layer.  For CelebA, the model architecture is borrowed from \cite{heinze-demlConditionalVariancePenalties2021}, comprised of four $5 \times 5$ convolutional layers with Leaky ReLU activations, followed by a fully connected and a softmax layer. The convolutions produce 16, 32, 64 and 128 channels respectively, using a stride of 2.

\paragraph{Hyper-parameter search.}
To optimize our algorithms, we performed grid search over the hyper-parameters outlined in Table \ref{tab:krs_param_adv}. We first searched for the hyper-parameters of ERM that achieved the lowest classification error on unperturbed images independent of the training set, averaged across seeds $\{0, 10, 20\}$. The same procedure was then repeated for ARKS, WRM and PGD; the hyper-parameters that most closely recovered the lowest possible error given by ERM were selected.

We note that although WRM~\eqWRM requires for the Lagrangian relaxation coefficient $2y \geq L$, the Lipschitz constant of the loss gradient,  $L$ is hard to compute for our models. In order to provide a fair comparison to \sdro, we searched for the optimal $y$ within a large range. Similarly, the PGD benchmark was tuned to defend against worst-case disturbances $\delta$ within a box $\{\delta: \| \delta\|_{\infty} \leq \Delta  \}$. $\Delta$ was set to 0.3, the maximum magnitude of the disturbances considered during evaluation. For both the optimization of model weights and the inner optimization of worst-case perturbations, we consider common optimizers such as stochastic gradient descent (SGD), Adam \citep{kingma2014adam} and AMSGrad \citep{Tran_2019}. 

\begin{table}[h]
  \renewcommand{\arraystretch}{1.3}
  \begin{center}
      \begin{tabular}[h]{@{\hspace{0.15cm}}l@{\hspace{0.5cm}} l@{\hspace{0.01cm}} c@{\hspace{0.22cm}} c@{\hspace{0.2cm}} c@{\hspace{0.15cm}}}
          \parbox{1.1cm}{Algorithm} & Hyper-parameters & Fashion-MNIST & CIFAR-10 & CelebA \\  \hline 
          \multirow{ 5}*{\parbox{1.1cm}{ALL}} & training epochs & 45 & 65  & 45 \\
          &batch size $\in \{64, 128, 256\}$ & 256 & 128 & 128 \\
          &optimizer $\in \{$SGD,  AMSGrad, Adam$\}$ & AMSGrad & SGD & AMSGrad \\
          & learning rate $\in \left[0.0001, 0.2\right]$ & 0.001 & 0.1 & 0.001 \\
          &decay learning rate $\in \{$True, False$\}$ & False & True & False \\ \hline
          \multirow{ 3}*{\parbox{1.1cm}{\sdro, WRM, PGD}}&inner optimization epochs & 15 & 15 & 15 \\
          &inner optimizer  $\in \{$SGD,  AMSGrad, Adam$\}$ & AMSGrad & AMSGrad & AMSGrad\\
          &decay inner learning rate $\in \{$True, False$\}$ & False & False & False\\\hline
          \multirow{ 2}*{\parbox{1.1cm}{\sdro}}&inner learning rate $\in \left[0.0001, 0.1\right]$ & 0.01 & 0.001 & 0.002\\
          &kernel bandwidth $\sigma \in \left[0.01, 10\right]$  & 0.5& 0.1& 0.2 \\ \hline
          \multirow{ 2}*{\parbox{1.1cm}{WRM}}&inner learning rate $\in \left[0.0001, 0.1\right]$ & 0.05& 0.001& 0.002\\
          &Lagrangian coefficient $y \in \left[0.01, 1000\right]$  & 1.0 & 50 & 4.0\\ \hline
          PGD &inner learning rate $\in \left[0.0001, 0.1\right]$ & 0.001 & 0.0001 & 0.0005\\
          \bottomrule       
      \end{tabular} 
  \end{center}
\caption{Hyper-parameter configuration for classification tasks on the Fashion-MNIST, CIFAR-10 and CelebA datasets. For clarity, we indicate a range for large sets of hyper-parameter values.}
\label{tab:krs_param_adv}
\end{table}

In the CIFAR-10 classification task, the learning rate is decayed by a multiplicative factor of 0.1 at steps 25 and 50. Additionally, to prevent ERM from overfitting and ensure a fair comparison across algorithms, they all use a weight decay of $0.0001$ as per the authors of the ResNet architecture \citep{he2015deep}. 

\paragraph{Adversarial attacks.}

In our evaluation, we perturb test images with worst-case disturbances $\delta$ within a box $\{\delta: \| \delta\|_{\infty} \leq \Delta  \}$. We consider two types of adversarial attacks using this norm. Firstly,  \textit{black-box attacks} for which the disturbances are generated by attacking the model trained with ERM (for each random seed) using PGD and FGSM. Secondly, instead of evaluating each objective on ERM-adversarial loss, we perform \textit{white-box attacks} using PGD on each model individually. In all evaluations, the perturbed images $x + \delta$ are clipped to the valid image range $\left[0, 1 \right]$. PGD performs 15 iterations of gradient ascent on $\delta$ with a learning rate $\alpha=0.03$ for Fashion-MNIST, and $\alpha=0.02$ for CIFAR-10 and CelebA. FGSM performs one iteration of gradient ascent by design.

\paragraph{Supplementary results on black-box attacks.} 
We repeat Figure \ref{fig:pgd_attacks} with the PGD adversarial training algorithm and for FGSM attacks with respect to $\|.\|_{\infty}$. The results for Fashion-MNIST are shown on Figure \ref{fig:fmnist_with_pgd}, and for CIFAR-10 on Figure \ref{fig:cifar_with_pgd}. The same procedure is also applied to binary classifiers trained on CelebA face images, with the results shown on Figure \ref{fig:celeba_attacks}. 

Across all tests, we see that ERM offers the least robustness. This is expected for an optimistic statistical estimator that underestimates risk and is a well-known fact in stochastic optimization~\citep{shapiroLecturesStochasticProgramming2014}.
We emphasize that we included the comparison with the PGD benchmark for completeness. In reality, \sdro is only directly comparable with WRM since they are DRO approaches while PGD is based on RO, as we have discussed in the main text.
We do not intend to show PGD to be less robust than ARKS and WRM since the robustness of DRO and RO depends on the choices of uncertainty and ambiguity sets.

ARKS and WRM exhibit similar adversarial profiles, with ARKS offering slightly more robustness as the magnitude of the adversarial perturbations increase. We use the hyper-parameter values outlined in Table~\ref{tab:krs_param_adv}, but also include \sdro with a higher $\sigma$ and WRM with a lower $y$ (but otherwise optimal hyper-parameter values), exhibiting improved robustness for a small sacrifice on classifying unperturbed images. Further increasing $\sigma$ or decreasing $y$ would increase this test-time penalty. However WRM would rapidly become unstable. To use WRM for deep networks such as ResNet, $y$ needs to be tuned to a high value in order to prevent instabilities from the propagation of the input perturbation through the network. 

\begin{figure}[h!]
	\centering
	\begin{subfigure}{0.49\textwidth}
        \centering
        \includegraphics[width=2.7in,valign=c]{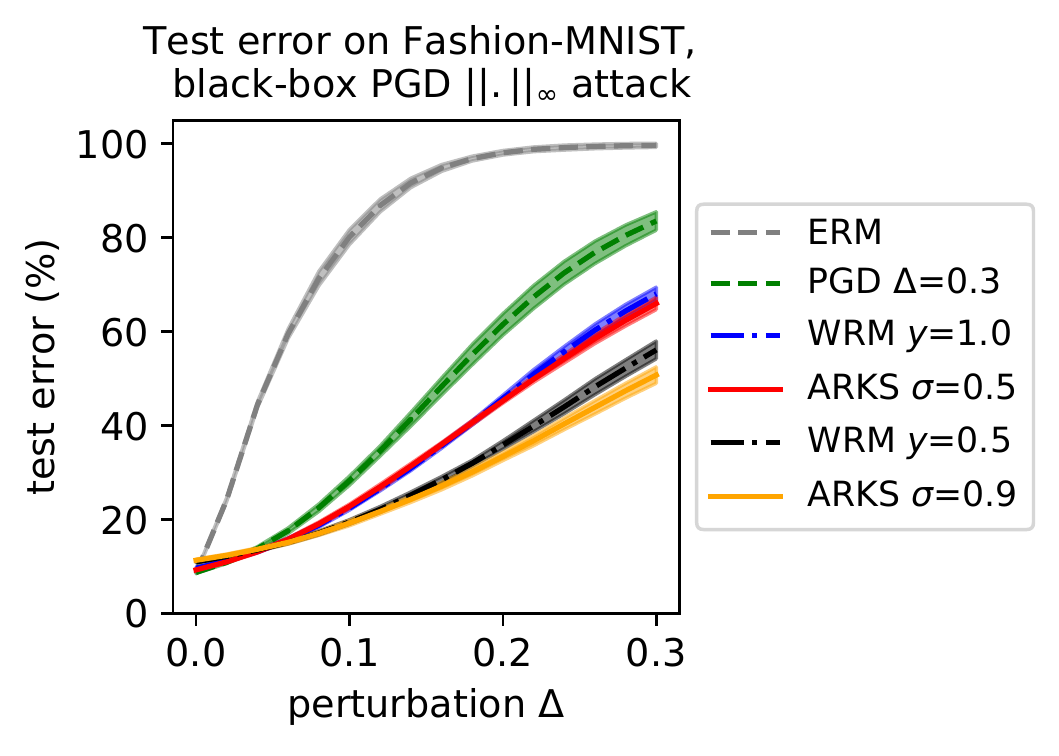}
    \end{subfigure}
    	\begin{subfigure}{0.49\textwidth}
        \centering
        \includegraphics[width=2.7in,valign=c]{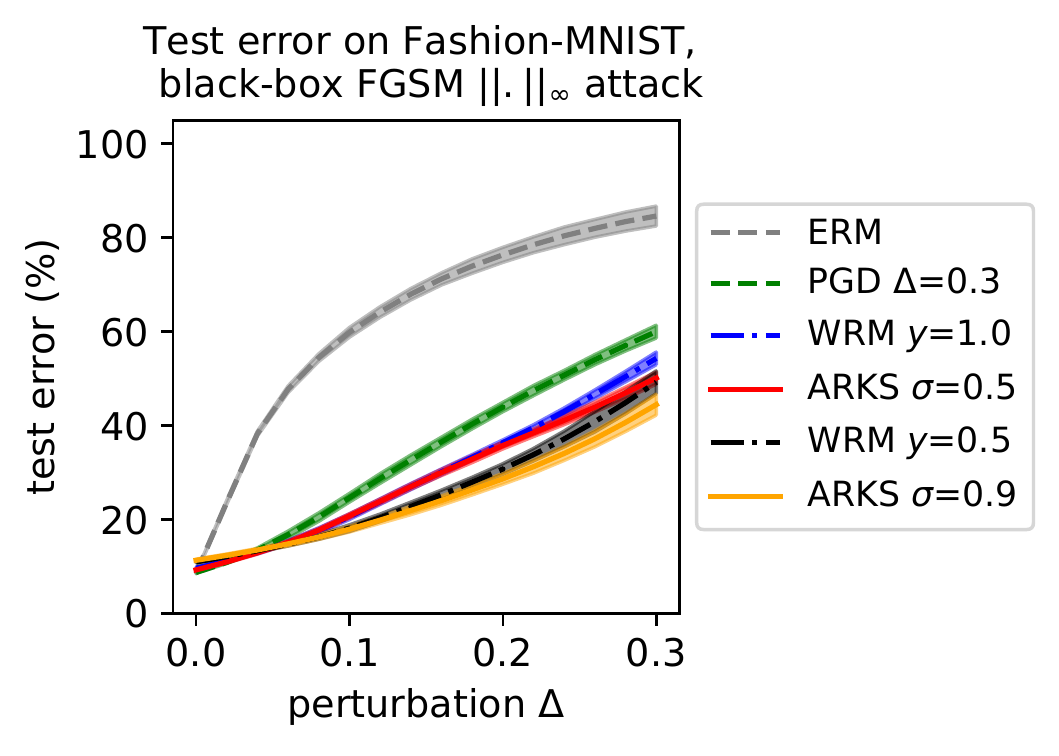}
    \end{subfigure}
    \caption{Black-box PGD (\textbf{left}) and FGSM (\textbf{right}) attacks with respect to $\| . \|_\infty$ on the Fashion-MNIST dataset. We show the classification error on perturbed test images versus the allowed magnitude of the adversarial perturbation  $\Delta$. For all algorithms, we report the mean and standard deviation across 10 random seeds.}
    \label{fig:fmnist_with_pgd}
\end{figure}

\begin{figure}[h!]
	\centering
	\begin{subfigure}{0.49\textwidth}
        \centering
        \includegraphics[width=2.7in,valign=c]{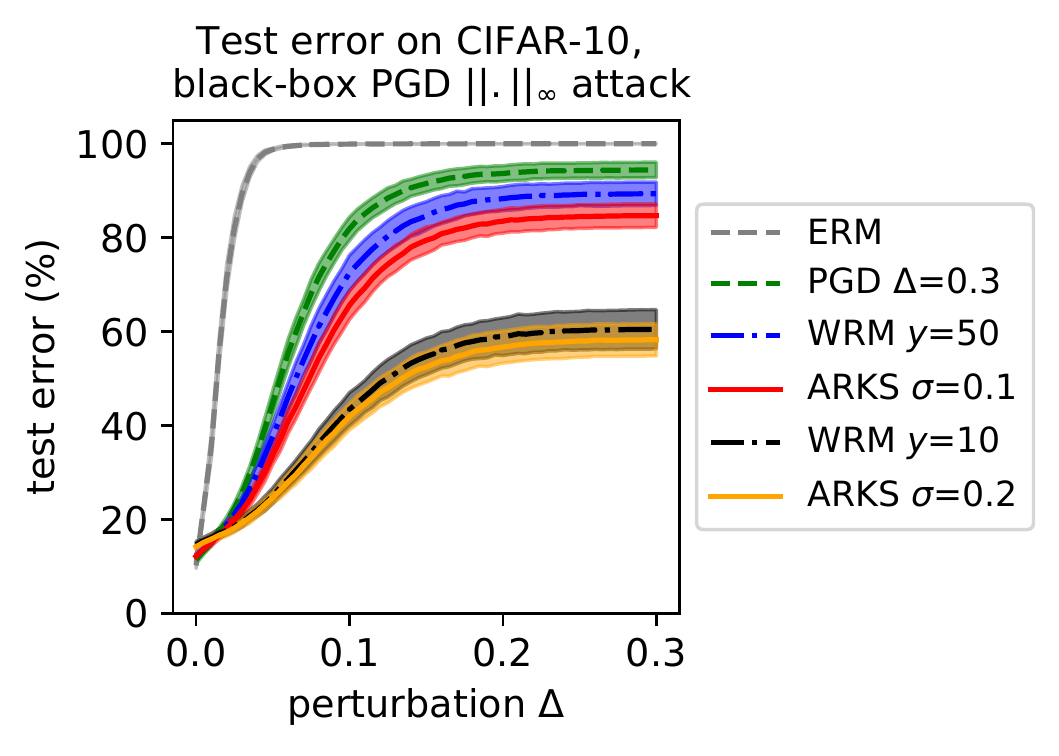}
    \end{subfigure}
    	\begin{subfigure}{0.49\textwidth}
        \centering
        \includegraphics[width=2.7in,valign=c]{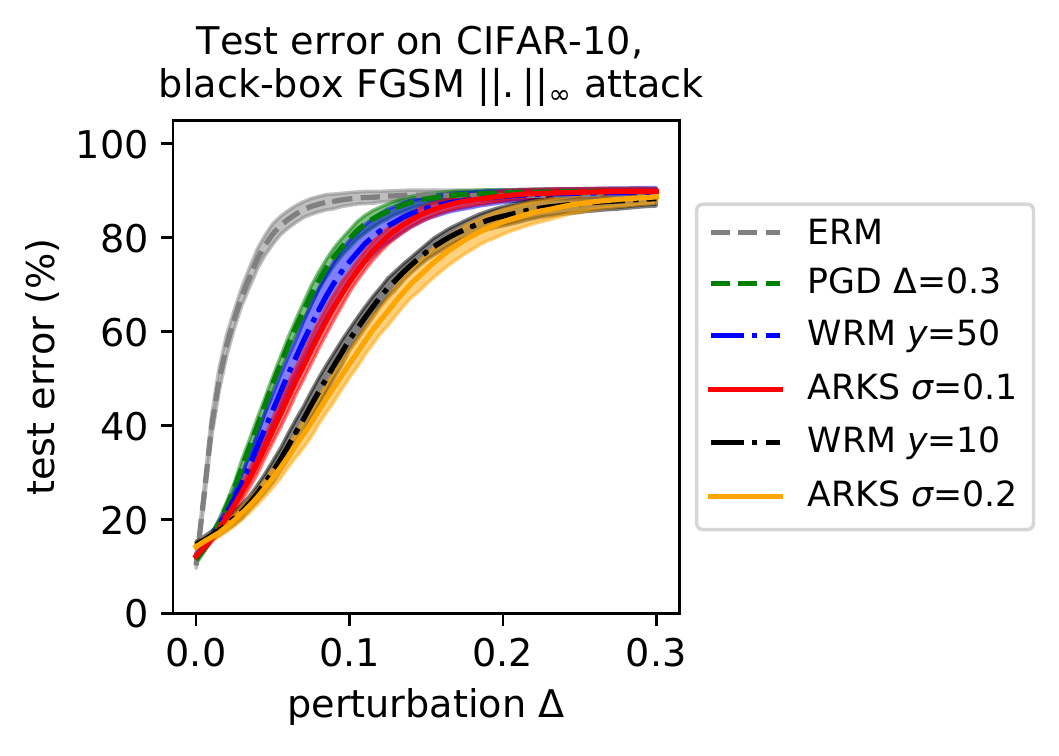}
    \end{subfigure}
    \caption{Black-box PGD (\textbf{left}) and FGSM (\textbf{right}) attacks with respect to $\| . \|_\infty$ on the CIFAR-10 dataset. We show the classification error on perturbed test images versus the allowed magnitude of the adversarial perturbation  $\Delta$. For all algorithms, we report the mean and standard deviation across 10 random seeds.}
    \label{fig:cifar_with_pgd}
\end{figure}

\begin{figure}[h!]
	\centering
	\begin{subfigure}{0.49\textwidth}
        \centering
        \includegraphics[width=2.7in,valign=c]{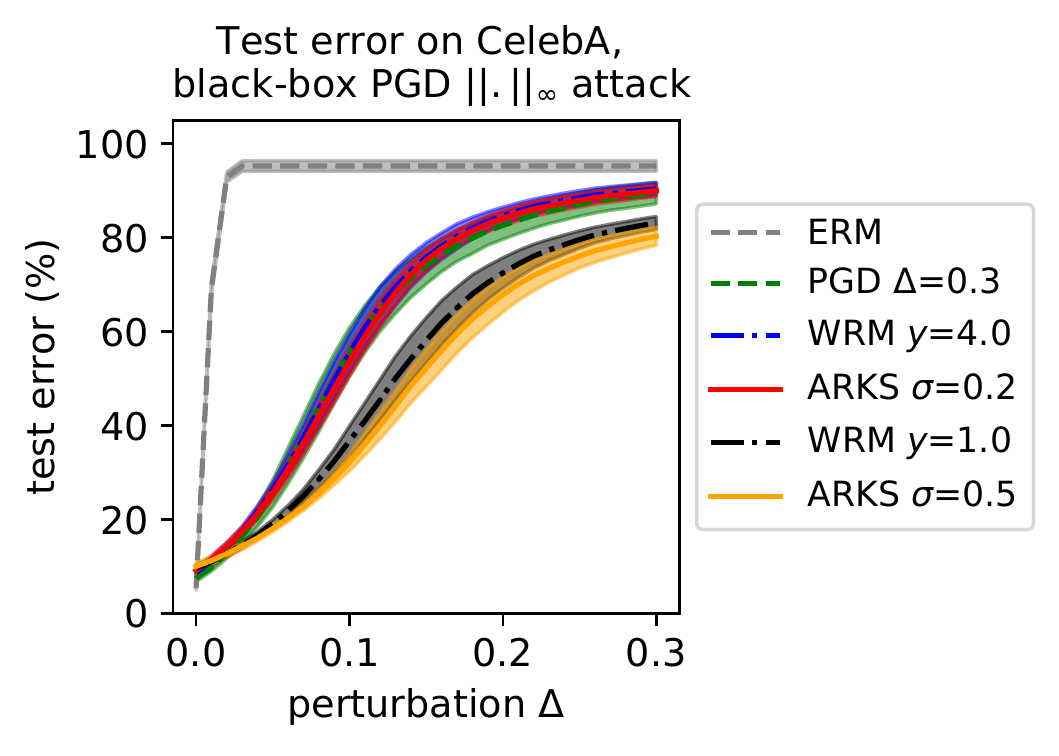}
    \end{subfigure}
    	\begin{subfigure}{0.49\textwidth}
        \centering
        \includegraphics[width=2.7in,valign=c]{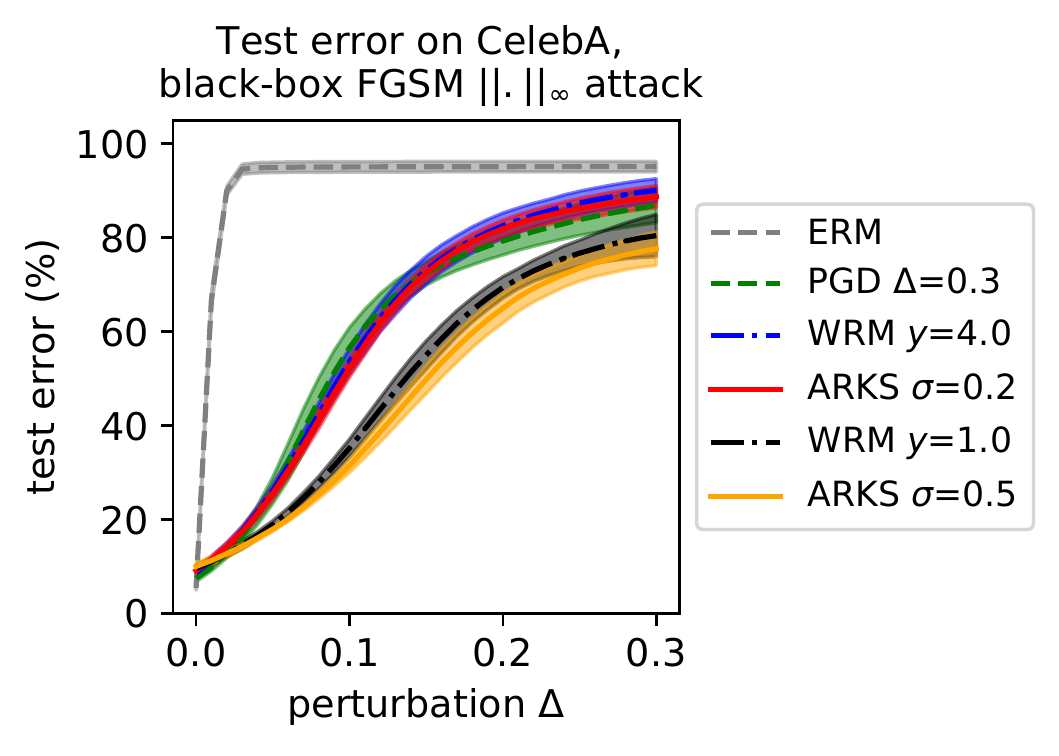}
    \end{subfigure}
    \caption{Black-box PGD (\textbf{left}) and FGSM (\textbf{right}) attacks with respect to $\| . \|_\infty$ on the reduced CelebA dataset. We show the classification error on perturbed test images versus the allowed magnitude of the adversarial perturbation  $\Delta$. For all algorithms, we report the mean and standard deviation across 10 random seeds.}
    \label{fig:celeba_attacks}
\end{figure}

\paragraph{Supplementary results on white-box attacks.} In previous experiments, models trained with each learning objective were evaluated on the same set of perturbed images generated by black-box PGD attacks on the models trained with ERM. In this experiment, each model is evaluated on perturbations generated by white-box PGD $\|\|_\infty$ attacks on itself. The left panel of Figure~\ref{fig:whitebox_pgd_attacks} shows the evaluation results for Fashion-MNIST, and the right panel for CIFAR-10.  

\begin{figure}[h!]
	\centering
	\begin{subfigure}{0.49\textwidth}
        \centering
        \includegraphics[width=2.7in,valign=c]{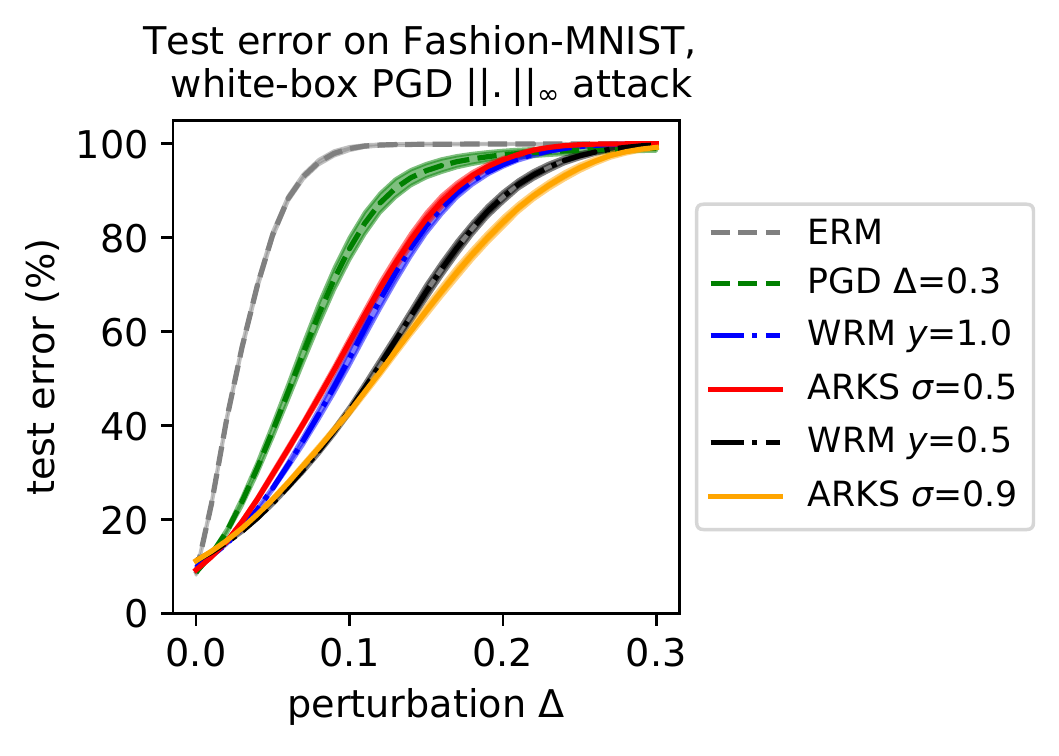}
    \end{subfigure}
    	\begin{subfigure}{0.49\textwidth}
        \centering
        \includegraphics[width=2.7in,valign=c]{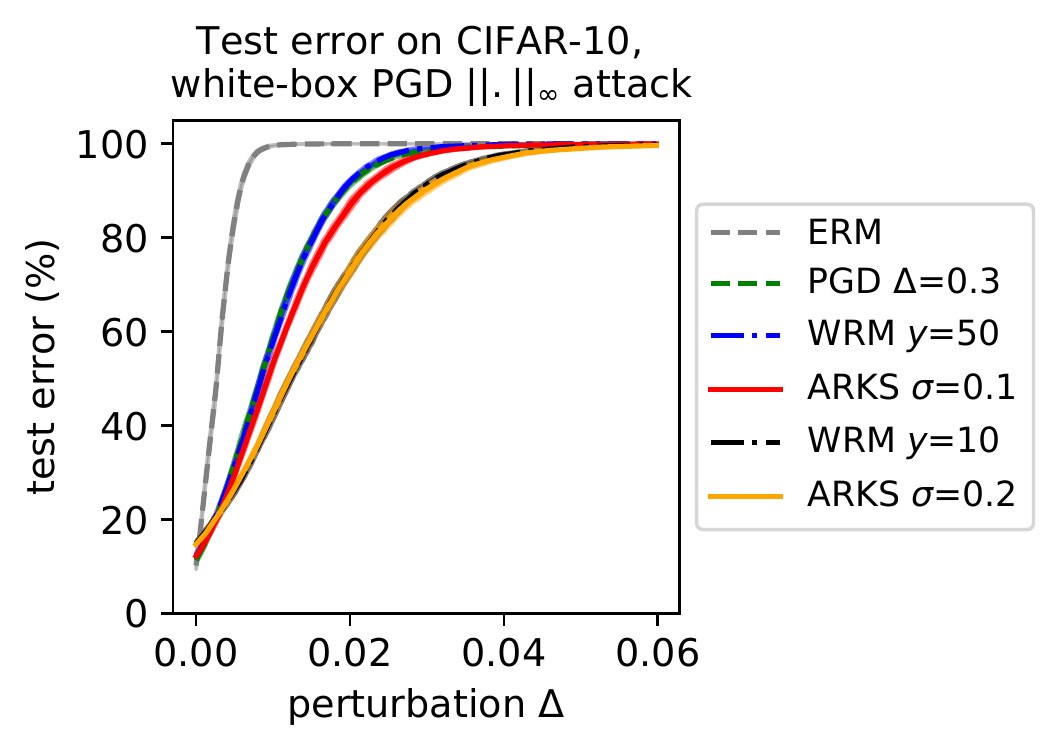}
    \end{subfigure}
    \caption{White-box PGD attack with respect to $\| . \|_\infty$ on the Fashion-MNIST (\textbf{left}) and CIFAR-10 (\textbf{right}) datasets. We show the classification error on perturbed test images versus the allowed magnitude of the adversarial perturbation  $\Delta$. For all algorithms, we report the mean and standard deviation across 10 random seeds. On the right plot, the curves for WRM $\gamma=50$ and PGD almost coincide.}
    \label{fig:whitebox_pgd_attacks}
\end{figure}

\subsection{Comparison with the work of \cite{madryDeepLearningModels2019}}

In this section, we briefly contrast ARKS against the work of \cite{madryDeepLearningModels2019} that introduces PGD, a common adversarial learning algorithm. However, we emphasize that ARKS is best compared to WRM, a state-of-the-art method based on DRO, as PGD is based on RO. We refer to \cite{sinhaCertifyingDistributionalRobustness2017} for extensive comparisons of PGD with WRM.  

Theoretically, as ARKS and WRM are derived using the strong duality of DRO, they are less conservative than RO and therefore PGD. This is reflected in robustifying against \textit{imperceptible attacks} in the evaluations of \cite{sinhaCertifyingDistributionalRobustness2017}. Computationally, the inner maximization of PGD is typically difficult in the case of non-convex losses, while ARKS and WRM both apply convexification (see our discussion in Section \ref{sec:bg_dro}) and smoothing. For instance, the projected gradient step is prone to get stuck in local optima.

Furthermore, ARKS makes contributions in areas that substantially differ from the main idea of \cite{madryDeepLearningModels2019}. Notably, ARKS sheds light on using kernel smoothing theory to robustify deep neural networks. Kernel methods are conventionally not scalable, but our experiments show that ARKS can scale in practical adversarial learning tasks. While previous attempts in robust kernel density estimation exist, the underlying robust kernel methods are not adversarial to specific loss functions nor scalable. ARKS also opens up a new lane of designing kernels for robustness and causal inference. The current work only tested with the default Gaussian kernel, whose empirical performance is already competitive. Finally, our work develops new functional analysis theory for robustness, which does not exist in \citep{madryDeepLearningModels2019}. 

\subsection{Further analysis of \sdro}
In the toy problem for \figtoyA{}, we followed the set-up
from \citep{zhuKernelDistributionallyRobust2020d}
for the robust least-squares example, which appeared in \citep{elghaouiRobustSolutionsLeastSquares1997,boydConvexOptimization2004}.
We formulate the optimization problem
$\min_{ \var}{\|A(\xi)\cdot \var-b\|_2^2}$,
where $A(\xi)$
is assumed to be uncertain and given by
$A(\xi) = A_0 + \xi A_1$,
where
$-1\leq\xi\leq1$ is an uncertain variable.
We refer to \citep{zhuKernelDistributionallyRobust2020d} for more details.

The ERM solution to the robust least-squares problem is computed by solving a convex program.
The RO solution is obtained by solving the SDP reformulation in \citep{elghaouiRobustSolutionsLeastSquares1997}. Additional visual insights, such as a comparison of our approach with the ERM and RO solution, a comparison of empirical and adversarial distribution, are highlighted in Figure~\ref{fig:tradeoff_advkde}. We refer to the caption for more details.

For \sdro, we solved the program~\eqref{eq:ksmooth} using stochastic gradient descent ascent (GDA): in each iteration, we sample a mini-batch $\{\xi_i\}$, then performed gradient ascent to maximize the inner objective of \eqref{eq:ksmooth} w.r.t. the inner variable $u$.
In practice, we performed $10$ steps inner gradient ascent using the L-BFGS routine. 
The inner maximization problem is illustrated in Figure~\ref{fig:sgdstep_landscape}.
Following that, we took one outer gradient descent step w.r.t. the decision variable \var, and repeated the loop. See Algorithm \ref{alg:smooth} for more details.
The outer optimization problem is solved using the L-BFGS optimization routine in PyTorch, with a learning rate of 1.
 \begin{figure}[h]
    \centering
    \begin{subfigure}{0.49\textwidth}
        \centering
        \includegraphics[height=1.7in,valign=c]{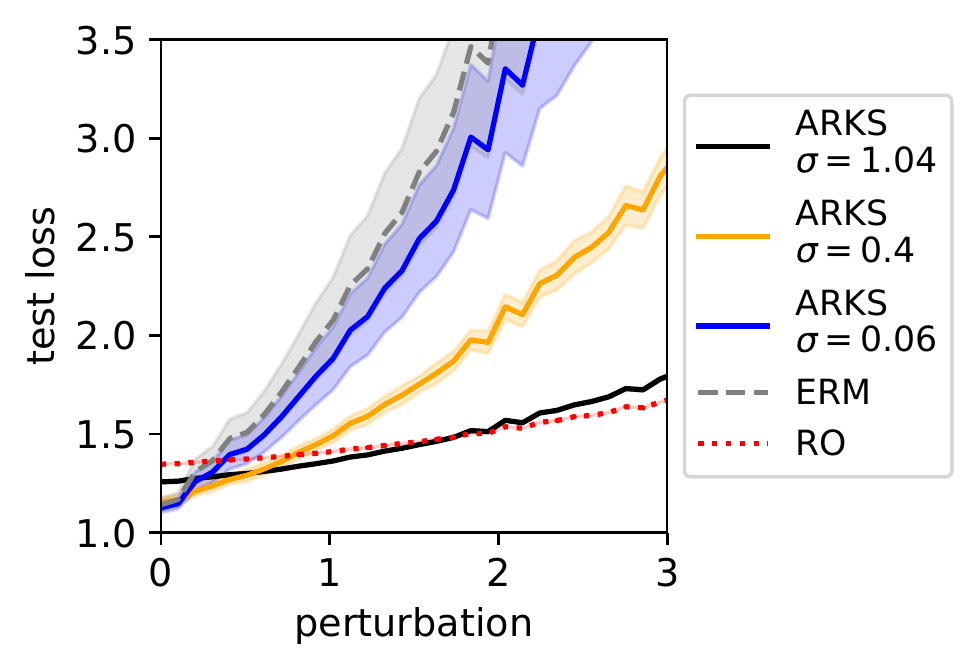}
        \label{fig:tradeoff}
    \end{subfigure}
    \begin{subfigure}{0.49\textwidth}
        \centering
        \includegraphics[height=1.7in,valign=c]{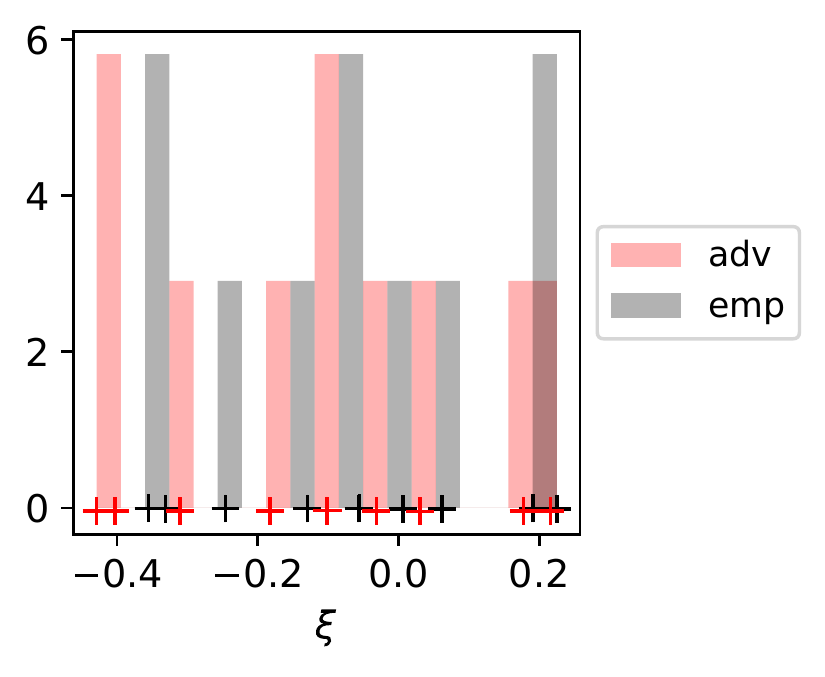}
        \label{fig:interpl}
    \end{subfigure}
    \caption{
    (\textbf{left})
        We plot the performance-robustness trade-off of \sdro for various width settings (black, yellow, blue).
        We create settings of perturbed test distribution (different from the training data distribution, with the random variables satisfying
        $X_\textrm{test} = (1+\delta)\cdot X_\textrm{train}$), with increasing amounts of distribution shift parameter $\delta$.
        We compare with ERM and the worst-case robust optimization (RO) solution of~\citep{elghaouiRobustSolutionsLeastSquares1997}.
        We see that \sdro with large width $\sigma$ is more robust and conservative, tending towards RO.
        When width $\sigma$ is small, \sdro achieves better performance but less robust under a large distribution shift.
        Overall, \sdro performs as Proposition~\ref{thm:recover_ro} indicates, achieving a balance of moderate performance and robustness between ERM and RO.
        For every algorithm, we ran train $10$ independent models.
        The error bars are in standard errors.
    (\textbf{right})
        Histogram density estimation with $\sigma=0.41$ (as used in \sdro) for both the empirical data (black) and the perturbed (adversarial) points (red). The closed-form MMD estimator~\citep{grettonKernelTwoSampleTest2012} between the samples and the adversarial samples evaluates to MMD $= 0.167 \pm 0.02$, averaged over 10 independent runs.
    }
    \label{fig:tradeoff_advkde}
    \end{figure}

\subsection{Results for additional models and data sets}
\label{sec:neural}
In addition to the linear model in the RLS example and the previously reported benchmarks on CIFAR-10, Fashion-MNIST, and CelebA datasets,
we report other results using \sdro with a smaller neural network model.
We used a multi-layer perceptron with two fully connected hidden layers, with 32 hidden units for each layer.
The multi-layer perceptron (MLP) uses the ELU activation because of its smoothness property.
We trained $5$ independent models for every setting and use stochastic weight averaging \citep{izmailov2018averaging} for all neural network training.
We report the results in Figure~\ref{fig:diabetes_samples} and the caption therein. For exact hyperparameter configurations of the MLP training, consult Table~\ref{tab:mlp_param}.
\begin{table}[h]
  \renewcommand{\arraystretch}{1.3}
  \begin{center}
      \begin{tabular}[h]{l@{\hspace{0.5cm}} l@{\hspace{0.5cm}} c@{\hspace{0.5cm}} c@{\hspace{0.5cm}} c}
          \parbox{1.1cm}{Algorithm} & Hyper-parameters & Diabetes & Iris \\  \hline 
          \multirow{ 5}*{\parbox{1.1cm}{ERM \& \sdro}}
          & batch size & 256 & 128\\
          & optimizer & Adam & SGD \\
          & learning rate & 0.001 & 0.1\\
          & epochs & 2000 & 2000 \\
          \hline
          \multirow{ 3}*{\parbox{1.1cm}{\sdro}}
          & inner optimizer & L-BFGS & L-BFGS \\
          & inner learning rate & 1 & 1 \\
          & inner epochs & 10 & 10 \\
          \bottomrule
      \end{tabular}
  \end{center}
  \caption{Hyperparameter configurations for experiments using a multi-layer perceptron as model}
  \label{tab:mlp_param}
\end{table}

 We report results on the diabetes regression dataset \footnote{\label{footnote1}available at \url{https://scikit-learn.org/stable/datasets/toy_dataset.html}} and the iris plants classification dataset \cref{footnote1}. %
To test the robustness property of the methods, we add the perturbation to the test data samples using the following rule
\begin{equation*}
  X_\textrm{perturbed} = X_\textrm{test} + d \cdot \textrm{Uniform}(-1, 1).
\end{equation*}
We increase the perturbation magnitude $d$ from $0$ to $1$. The results are reported in Figure~\ref{fig:diabetes_samples}.
\begin{figure}[h]
	\centering
  \includegraphics[width=2.6in,valign=c]{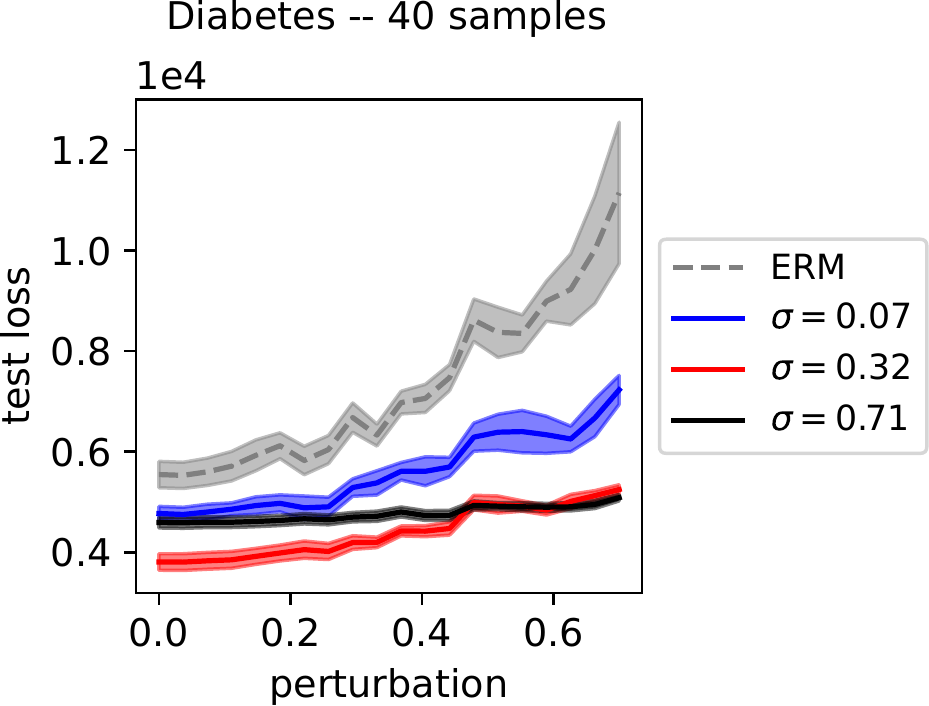}
  \includegraphics[width=2.6in,valign=c]{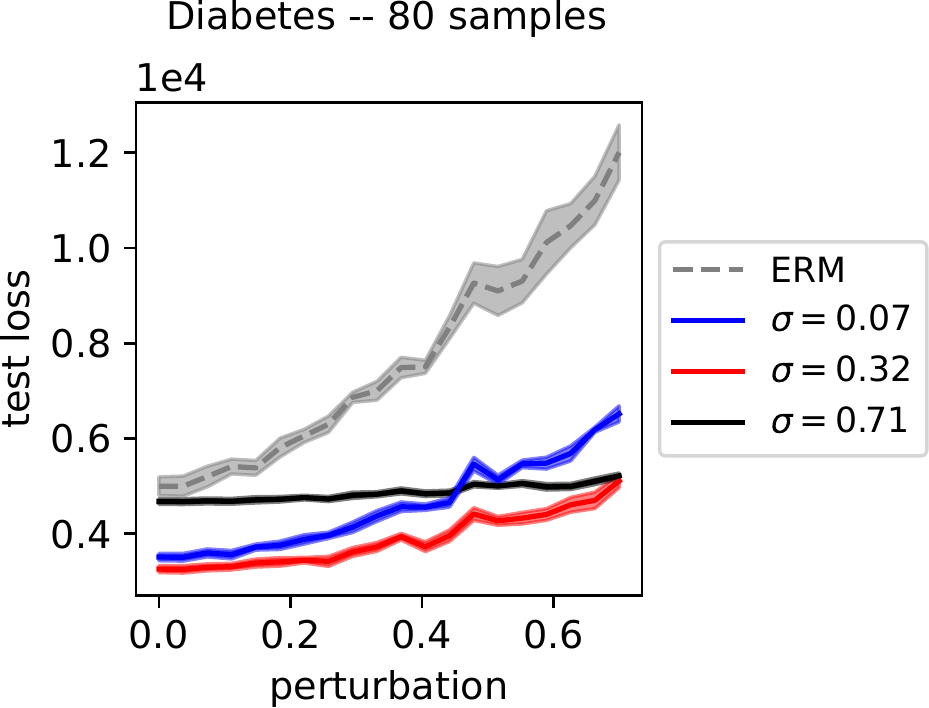}
   
   \vspace{0.7cm}
   
  \includegraphics[width=2.6in,valign=c]{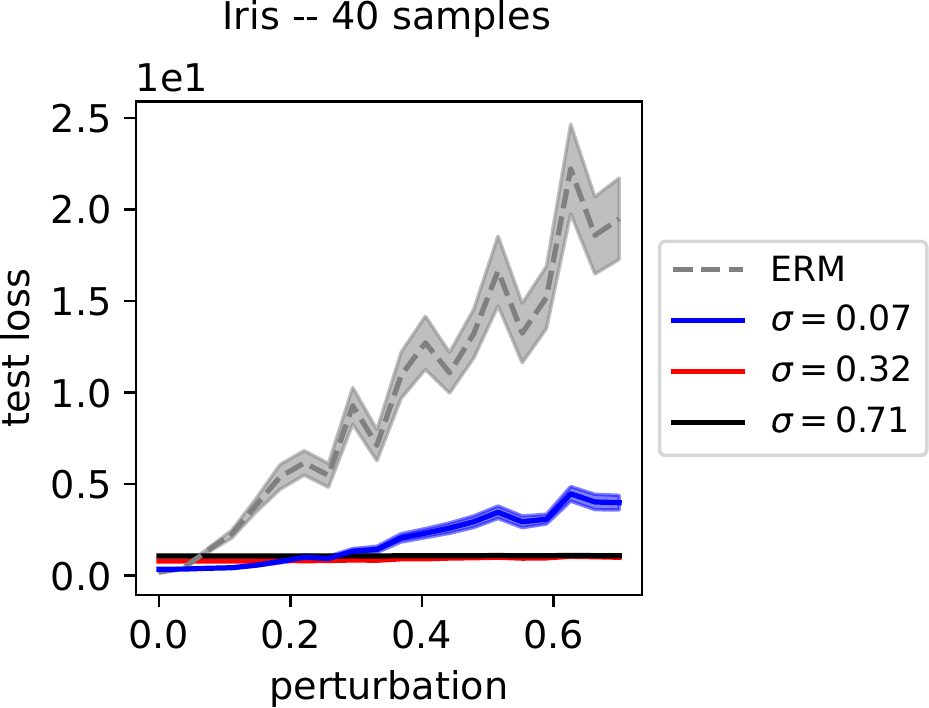}
  \includegraphics[width=2.6in,valign=c]{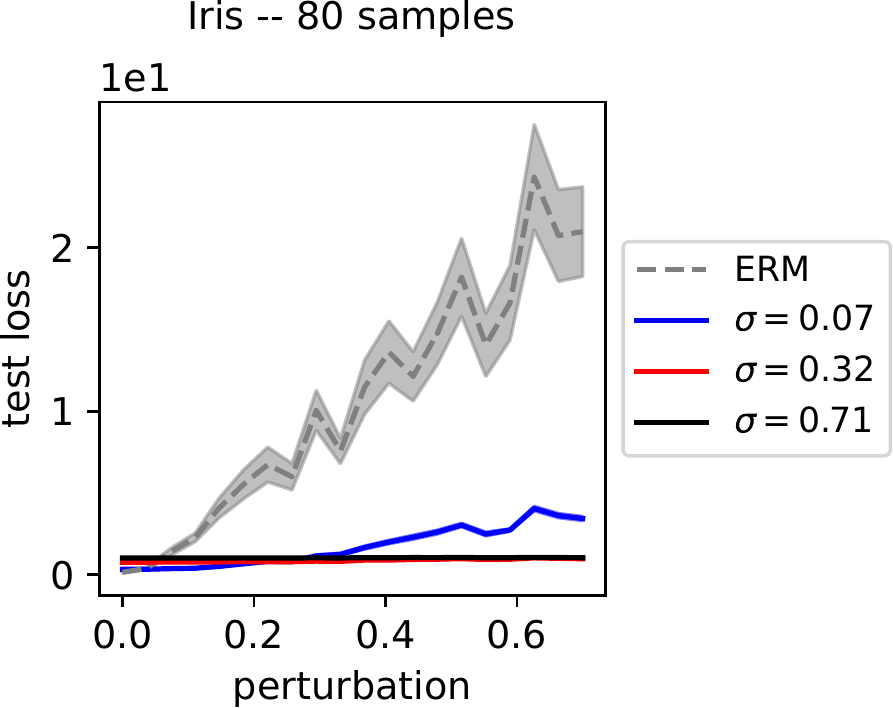}
  \caption{
  We trained the neural network model with the \sdro algorithm.
  We compare the results with the ERM solutions.
  The top-left figure shows model evaluations trained with $40$ samples from the diabetes dataset; the top-right figure corresponds to $80$ training samples from the diabetes dataset.
  The bottom-left figure shows model evaluations trained with $40$ samples from the iris plant dataset and the bottom-right figure for $80$ training samples from the same dataset.
  Across all figures,
  we observe that the ERM performance degrades as the perturbation of the test data increases.
  By contrast, and as expected, \sdro has better robustness against the distribution shift.
  For smaller kernel width $\sigma$, the curve approaches the ERM solution.
  With increasing kernel widths, the \sdro solution becomes more robust but is also more conservative.
  Note that the curves are the mean test errors; the error bars denote the standard errors}
  \label{fig:diabetes_samples}
\end{figure}

\end{document}